\documentclass{article}

\usepackage{titletoc}
\usepackage[page, header, toc, page]{appendix} 

\usepackage{wrapfig}
\usepackage{tikz}
\usepackage{pgfplots}
\pgfplotsset{compat=1.18}
\usepackage{subcaption}

\usepackage[final]{neurips_2025}

\PassOptionsToPackage{compress, sort}{natbib}





\usepackage{microtype}
\usepackage{graphicx}
\usepackage{float,caption,subcaption}
\usepackage{booktabs} 
\usepackage{algorithm}
\usepackage{algorithmic}
\usepackage{enumitem}
\usepackage{nicefrac} 

\usepackage{tcolorbox}

\PassOptionsToPackage{x11names}{xcolor}
\usepackage[x11names]{xcolor} 
\definecolor{darkblue}{rgb}{0.0,0.0,0.65}
\definecolor{darkred}{rgb}{0.65,0.0,0.0}
\definecolor{darkgreen}{rgb}{0.0,0.5,0.0}
\definecolor{tab:blue}{RGB}{31,119,180}  
\definecolor{tab:red}{RGB}{214,39,40}  
\definecolor{tab:green}{RGB}{44,160,44}  
\definecolor{tab:orange}{RGB}{255,127,14}  
\definecolor{background}{HTML}{EDE7DE}

\usepackage{mdframed}
\mdfsetup{
linecolor=black!40,
linewidth=0.5pt,
innerleftmargin = 5pt,
innertopmargin = 5pt,
innerrightmargin= 5pt,
innerbottommargin = 5pt,
}

\usepackage[pagebackref]{hyperref}
\hypersetup{
	colorlinks = true,
	citecolor  = darkblue,
	linkcolor  = darkblue,
	filecolor  = darkblue,
	urlcolor   = darkblue,
}


\usepackage{amsmath}
\usepackage{amssymb}
\usepackage{mathtools}
\usepackage{amsthm}

\usepackage{multirow}
\usepackage{bbm}
\usepackage{enumitem}
\usepackage{float}
\usepackage{thmtools, thm-restate}

\usepackage[capitalize,noabbrev]{cleveref}

\theoremstyle{plain}
\newtheorem{theorem}{Theorem}[section]

\newtheorem{lemma}[theorem]{Lemma}

\theoremstyle{definition}

\theoremstyle{remark}


\usepackage{amsmath,amsfonts,amssymb,bm,bbm,pifont}


\def\inner#1{\left\langle#1\right\rangle}

%
%

%
%
%
%
%



















\def\1{\bm{1}}










\DeclareMathAlphabet{\mathsfit}{\encodingdefault}{\sfdefault}{m}{sl}
\SetMathAlphabet{\mathsfit}{bold}{\encodingdefault}{\sfdefault}{bx}{n}











\newcommand{\E}{\mathbb{E}}



\usepackage[utf8]{inputenc} 
\usepackage[T1]{fontenc}    

\setlist{nolistsep}
\setlist{leftmargin=*} 

\def\bz{{\mathbf z}}

\def\bx{{\mathbf x}}

\def\bH{{\mathbf H}}
\def\bP{{\mathbf P}}
\def\bI{{\mathbf I}}

\def\bw{{\mathbf w}}

\def\by{{\mathbf y}}
\def\bg{{\mathbf g}}

\definecolor{cornflowerblue}{rgb}{0.39, 0.58, 0.93}

\crefname{observation}{Observation}{Observations}
\newcounter{observation}
\newcommand{\newobservation}[2][]{%
  \refstepcounter{observation}%
  \ifx&#1&
  \else
    \label{#1}
  \fi
  \begin{tcolorbox}[colback=cornflowerblue!20!white, colframe=white,
    left=1pt,
    right=1pt,
    top=1pt,
    bottom=1pt]
  {\textbf{\emph{Observation \theobservation:} }#2}
  \end{tcolorbox}
}

\title{Through the River: Understanding the Benefit of Schedule-Free Methods for Language Model Training}

\author{%
  Minhak Song\thanks{Equal contribution.} \\
  KAIST\\
  \texttt{minhaksong@kaist.ac.kr} \\
  \And
  Beomhan Baek\footnotemark[1]~~\thanks{Work done as an undergraduate intern at KAIST.} \\
  SNU \& KAIST InnoCORE LLM \\
  \texttt{bhbaek2001@snu.ac.kr} \\
  \AND
  Kwangjun Ahn \\
  Microsoft Research \\
  \texttt{kwangjunahn@microsoft.com} \\
  \And
  Chulhee Yun \\
  KAIST \\
  \texttt{chulhee.yun@kaist.ac.kr} \\
}

\begin{document}

\maketitle

\begin{abstract}
As both model and dataset sizes continue to scale rapidly, conventional pretraining strategies with fixed compute budgets---such as cosine learning rate schedules---are increasingly inadequate for large-scale training. Recent alternatives, including warmup-stable-decay (WSD) schedules and weight averaging, offer greater flexibility. However, WSD relies on explicit decay phases to track progress, while weight averaging addresses this limitation at the cost of additional memory. In search of a more principled and scalable alternative, we revisit the Schedule-Free (SF) method~\citep{defazio2024road}, which has shown strong empirical performance across diverse settings. We show that SF-AdamW effectively navigates the ``river'' structure of the loss landscape without decay phases or auxiliary averaging, making it particularly suitable for continuously scaling training workloads. To understand this behavior, we conduct a theoretical and empirical analysis of SF dynamics, revealing that it implicitly performs weight averaging without memory overhead. Guided by this analysis, we propose a refined variant of SF that improves robustness to momentum and performs better under large batch sizes, addressing key limitations of the original method. Together, these results establish SF as a practical, scalable, and theoretically grounded approach for language model training.
\end{abstract}

\section{Introduction} 

As both model and dataset sizes continue to scale rapidly, conventional pretraining strategies with fixed training budgets---such as cosine learning rate schedules~\citep{loshchilov2017sgdr}---are becoming increasingly inadequate. These static approaches are ill-suited to the demands of large-scale, evolving datasets and open-ended training regimes. For example, DeepSeek-V3~\citep[\S4.2]{liu2024deepseek} employs a sophisticated multi-phase training procedure that falls outside the scope of traditional cosine scheduling.

To support prolonged and flexible training, practitioners have adopted more adaptive scheduling strategies. One widely used approach is the \emph{warmup-stable-decay} (WSD) schedule~\citep{hu2024minicpm}, which avoids committing to a fixed compute budget by maintaining a main ``branch'' with a constant learning rate (LR) and periodically branching into decaying LR trajectories to produce intermediate checkpoints---enabling flexible and continued training.
Despite its advantages, WSD has notable limitations. A key challenge lies in evaluating the quality of the current model without explicitly entering the decay phase. This lack of visibility complicates decisions around checkpointing and training continuation, leading to uncertainty in training management (see \cref{sec:wsd}).

One common workaround is to maintain a weight averaging, which improves generalization and provides more stable performance estimation. However, this comes at the cost of additional memory overhead and implementation complexity, especially in distributed training setups (see \cref{sec:wa}).  

These challenges motivate a key question:
\begin{center}
\emph{Is there an alternative that offers better flexibility, training visibility, and minimal resource overhead?}
\end{center}

\paragraph{Our main contributions.}
In this work, we explore this question and identify the Schedule-Free (SF) method~\citep{defazio2024road} as a principled and scalable approach for language model pretraining. Our contributions are summarized as follows:

\begin{itemize}
    \item We revisit the river-valley loss landscape and analyze two widely used strategies---WSD and weight averaging---through this lens, highlighting their respective strengths and limitations for scalable pretraining (\cref{sec:continuous}).
    \item We then focus on the SF method and show that it effectively follows the ``river'' structure of the loss landscape without requiring a decay phase or auxiliary averaging. This makes it particularly well-suited for continuously scaling training workloads (\cref{sec:sf_critiques}).
    \item We analyze its training dynamics both theoretically and empirically. Our findings reveal that SF implicitly performs a form of weight averaging, without requiring additional memory. We also show that it operates at the Edge of Stability~\citep{cohen2021gradient} and derive its associated central flow~\citep{cohen2025understanding}, providing a deeper understanding of its behavior (\cref{sec:sf_understand}).
    \item Based on these insights, we propose a refined version of the SF method that improves robustness to momentum parameters and scale better with large batch sizes---addressing key limitations of the original method (\cref{sec:refined}).
\end{itemize}

\section{Candidate Strategies for Scalable Pretraining}
\label{sec:continuous}

\subsection{Backgrounds: Loss Landscape of Neural Networks}

\begin{wrapfigure}{r}{0.4\textwidth}
    \vspace{-12pt}  
    \centering
     \includegraphics[width=\linewidth]{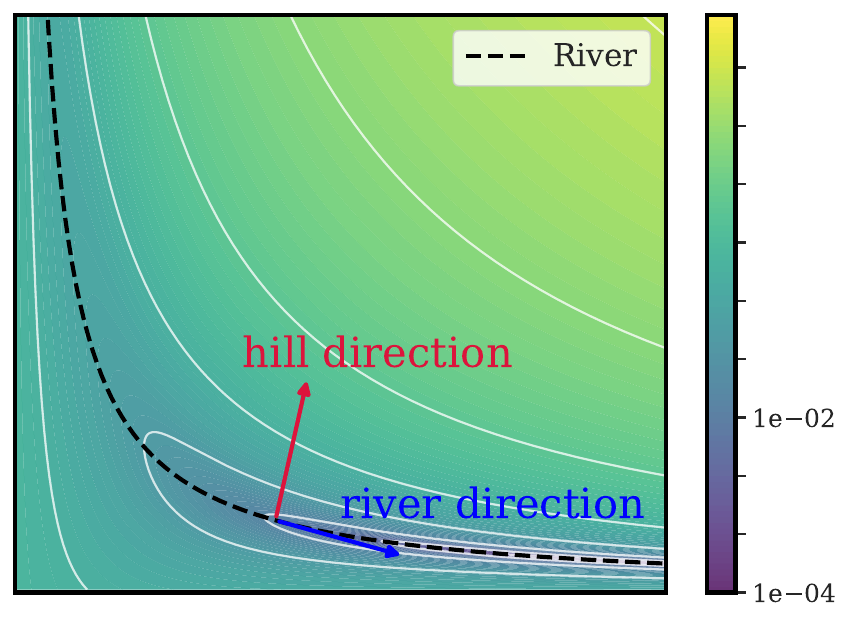}
    \vspace{-10pt}
    \caption{\textbf{River-valley structure in a toy loss landscape.} Contour plot of the objective defined in \cref{sec:toy}, illustrating the flat river direction and steep hill direction characteristic of the river-valley geometry.}
    \label{fig:river-valley}
    \vspace{-25pt}
\end{wrapfigure}
 
Despite the remarkable success of deep learning across numerous domains, classical optimization theory falls short of explaining the underlying dynamics of neural network training. In particular, the structure of the loss landscape has attracted growing attention as researchers seek to better understand why deep learning works and how to design more effective optimization algorithms.

\paragraph{River-Valley Landscape.}
Recent studies—motivated by diverse goals—have converged on a common hypothesis regarding the geometry of neural network loss landscape. This hypothesis, which we refer to as the \textit{river-valley loss landscape} \citep{wen2024understanding}, is closely related to concepts such as the \textit{ill-conditioned valley} \citep{song2025does}, \textit{basin in the loss landscape} \citep{hagele2024scaling}, and \textit{ravine in the loss landscape} \citep{davis2024gradientdescentadaptivestepsize}.

As its name suggests, a \emph{river-valley} loss landscape resembles a winding ravine: steep ``hill'' walls flank a relatively flat ``river'' floor that snakes through parameter space. \citet{wen2024understanding} formalize this picture to interpret the warmup-stable-decay (WSD) LR schedule, arguing that large, noisy updates during the \emph{stable} phase let SGD travel quickly \emph{downstream} along the river, while the fast‑decay phase pulls the iterate back to the bottom of valley. 
A concrete illustration is provided in \cref{fig:river-valley} (see also \cref{sec:toy}).
Complementary evidence from \citet{song2025does} show that ``effective'' optimizer updates happen along the river: projecting out the high‑curvature hill directions does not harm progress of learning, indicating that motion in the hill directions is often dispensable once the iterate is near the river.

Motivated by these works, we explicitly decompose the loss into two orthogonal parts:
\begin{itemize}[noitemsep]
\item the \textbf{river component}, which measures progress \emph{along} the low‑curvature valley floor, and
\item the \textbf{hill component}, which penalizes deviations \emph{away} from that floor.
\end{itemize} 

In the remainder of this section, we revisit existing strategies for scalable pretraining through the lens of the river-valley perspective.

\subsection{Warmup-Stable-Decay Schedule}\label{sec:wsd}

To address the limitations of cosine schedules---particularly their reliance on a pre-specified training budget---the warmup-stable-decay (WSD) schedule has been proposed as a more flexible alternative~\citep{zhai2022scaling, hu2024minicpm}.
WSD divides into three phases: warmup, stable, and decay, with the LR controlled separately in each. Unlike traditional schedules, WSD avoids committing to a fixed training horizon by maintaining a main branch with a constant LR and periodically branching off with decaying LRs to produce intermediate checkpoints. This structure enables flexible evaluation and checkpointing without the need to predefine the total number of training steps, making WSD a widely adopted strategy for scalable pretraining.

\paragraph{Understanding WSD.}
The WSD LR schedule has been widely adopted in large language model (LLM) pretraining due to its strong empirical performance. Motivated by this success, recent studies have sought to understand the mechanisms underlying its effectiveness. \citet{hagele2024scaling} systematically explore the impact of WSD hyperparameters and propose optimal choices for both the decay schedule and timing. In parallel, \citet{luo2025a} introduce a multi-power-law framework that predicts final pretraining loss as a function of the LR schedule. Interestingly, their learned optimal schedules closely resemble the WSD pattern, providing further evidence for its effectiveness.

\citet{wen2024understanding} provide a geometric interpretation of WSD through the lens of the river-valley loss landscape. Their key insights are:

\begin{enumerate}
\item During the stable phase, the high LR amplifies stochastic gradient noise, inducing oscillations along the high-curvature hill directions. Nevertheless, it enables rapid progress along the low-curvature river direction, which drives long-term improvement.
\item The decay phase plays a crucial role near convergence: it reduces oscillations in the hill directions and steers the iterate toward the valley floor, resulting in a sharp drop in loss that is not achievable during the stable phase alone.
\end{enumerate}

\paragraph{Limitations.}
A key limitation of WSD is its reliance on manually initiating the decay phase. While the stable phase often yields a relatively flat loss curve, a sharp drop typically occurs only once decay begins, which makes it difficult to assess model quality or forecast final performance in advance. This raises a natural question: can we design optimizers that closely track optimal loss---by reaching the valley floor and following the river---\emph{without} relying on explicit learning rate decay?

\subsection{Weight Averaging} \label{sec:wa}

Since its introduction in stochastic approximation~\citep{ruppert1988efficient, polyak1992acceleration}, parameter averaging has been widely explored for improving optimization stability and generalization in deep learning. By reducing gradient noise and smoothing the optimization trajectory, averaging schemes can often eliminate the need for explicit LR decay, making them an appealing candidate for scalable pretraining. Among them, two widely studied approaches stand out:

\begin{enumerate}
  \item \textbf{Stochastic Weight Averaging (SWA). } SWA~\citep{izmailov2019averagingweightsleadswider} enhances generalization by periodically averaging model weights. While the original method uses a cyclic LR schedule and averages every $c$ steps, many subsequent works simplify it by setting $c=1$,  performing a standard running averaging. \citet{hagele2024scaling} further refine SWA by applying fixed-length window averaging under constant LR, and demonstrated improved performance. 

  \item \textbf{Exponential Weight Averaging (EWA). } EWA maintains an exponential moving average of model weights, continuously smoothing the optimization trajectory. Recently, \citet{zhang2025how} show that combining EWA with a constant LR match the performance of cosine schedulers and WSD, particularly in large-batch settings. EWA has also been proven to be effective \emph{theoretically}  in nonconvex optimization~\citep{ahn2024adam, ahn2024general}.
\end{enumerate}

Interestingly, weight averaging is often regarded as functionally equivalent to LR decay. \citet{sandler2023trainingtrajectoriesminibatchlosses} analyze schemes like SWA and EWA, showing that their dynamics closely resemble those induced by decaying LRs. Motivated by this, several works advocate for weight averaging as a viable alternative to schedulers in scalable pretraining~\citep[e.g.,][\S4.1]{hagele2024scaling}. Concurrent with our work, \citet{li2025model} report that training with a constant LR, when combined with weight averaging, matches the performance of models trained with decaying schedules at any point during training, without the need for LR decay. From the river-valley perspective, weight averaging serves to cancel out oscillations along hill directions, enabling the optimization trajectory to align more closely with the river---\emph{without} relying on explicit LR decay.

\paragraph{Limitations.}
Despite its benefits, weight averaging introduces a memory overhead, as it requires maintaining an additional copy of the model parameters. This becomes a bottleneck in large-scale LLM pretraining. For instance, storing a single 16-bit copy of LLaMA-8B requires over 16 GB of memory. This limits the practicality of weight averaging in memory-constrained or large-scale training environments.

\subsection{Schedule-Free Methods}\label{sec:sf}

The Schedule-Free (SF) method~\citep{defazio2024road} provides a general framework that interpolates between two classical techniques: Polyak-Ruppert averaging, which returns the average of past iterates, and primal averaging, where gradients are evaluated at the averaged point. The abstract formulation of the SF method is given by:
\begin{align} 
\begin{split}
\bx_{t} &= (1-c_{t})\, \bx_{t-1} + c_{t}\, \bz_{t},\\[1mm]
\by_t &= (1-\beta)\, \bz_t + \beta\, \bx_t,\\[1mm]
\bz_{t+1} &= \bz_t - \gamma \Delta_t,
\end{split}
\tag{\texttt{SF}}\label{eq:schefree}
\end{align}
where $\gamma$ is the LR, $\beta$ is a momentum-like coefficient, $c_t = 1/t$, and the initialization satisfies $\bz_1 = \bx_1$. The update direction $\Delta_t$ is generic, making SF a flexible framework that can be combined with any baseline optimizer. For example, in Schedule-Free SGD, $\Delta_t$ corresponds to a stochastic gradient evaluated at the $\by_t$ iterate.

In this work, we focus on Schedule-Free AdamW (\texttt{SF-AdamW}), where $\Delta_t$ is computed using the RMSprop update along with a weight decay term. The full pseudocode is provided in \cref{alg:sf_adamw}. Here, $\beta_1$ denotes the coefficient $\beta$ in \eqref{eq:schefree}, and $\beta_2$ is the momentum parameter used in the second-moment of RMSprop.

\texttt{SF-AdamW} has demonstrated state-of-the-art performance across a range of deep learning tasks, including winning the Self-Tuning track in the 2024 AlgoPerf Challenge~\citep{dahl2023benchmarking,kasimbeg2025accelerating}. Importantly, it achieves this without requiring additional memory overhead compared to \texttt{AdamW}. However, its practical deployment reveals two key limitations: sensitivity to momentum hyperparameters~\citep{hagele2024scaling} and degraded performance under large batch sizes~\citep{zhang2025how,morwani2025connections}. We revisit both limitations in later sections and propose a refined variant of SF that addresses them.

\section{Schedule-Free Optimizer as a Scalable Pretraining Method} \label{sec:sf_critiques}

In \cref{sec:continuous}, we discussed the limitations of WSD and weight averaging as strategies for scalable pretraining. While WSD relies on a decay phase to achieve optimal performance, weight averaging avoids decay but incurs additional memory overhead. In this section, we empirically investigate the SF method as an alternative. We find it to be a strong candidate, as it requires neither decay nor extra memory.

\paragraph{Experimental Setup.}
We use a 124M parameter LLaMA~\citep{touvron2023llama,touvron2023llama2} style decoder-only transformer, trained with \texttt{SF-AdamW} using a warmup phase followed by a constant LR. 
The batch size is 0.5M tokens, and training is conducted on a 6B-token subset of SlimPajama~\citep{cerebras2023slimpajama}, with the compute budget determined by the Chinchilla scaling rule~\citep{hoffmann2022training}. 
We report validation loss in terms of perplexity. Additional results using a 124M parameter GPT-2~\citep{radford2019language} style decoder-only transformer trained on the OpenWebText2 dataset~\citep{pile} are provided in \cref{appendix:additional_results}. Full training details are available in \cref{appendix:experiment}.

\paragraph{Vanishing Benefit of Learning Rate Decay and Weight Averaging.}
As noted in \cref{sec:continuous}, standard \texttt{AdamW} with a constant LR typically yields suboptimal performance, requiring a decay phase or weight averaging to reach better solutions. To test whether \texttt{SF-AdamW} exhibits similar behavior, we first perform a grid search to identify the best hyperparameters for both \texttt{AdamW} and \texttt{SF-AdamW} under constant LR (after warmup). In our setting, the optimal hyperparameters are $(\beta_1, \beta_2)=(0.9, 0.95)$ with LR 1e-3 for \texttt{AdamW}, and $(\beta_1, \beta_2)=(0.95, 0.99)$ with LR 2e-3 for \texttt{SF-AdamW}. Using these configurations, we train each model and periodically save checkpoints. At each checkpoint, we run a LR decay phase and evaluate the resulting loss. We also track the EWA of the $\bx_t$ iterates throughout training. Results are shown in \cref{fig:Obs1}.

Surprisingly, unlike \texttt{AdamW}, \emph{neither the decay phase nor EWA provides additional benefit}: \texttt{SF-AdamW} with a constant LR consistently reaches near-optimal solutions on its own.

\paragraph{Schedule-Free Optimizer Tracks the River.}
We next examine how closely the \texttt{SF-AdamW} trajectory follows the river in the loss landscape, building on the observation by \citet{wen2024understanding} that \texttt{AdamW} with a decaying LR converges toward the river. Specifically, we run a short decay phase of \texttt{AdamW} (linear decay from 1e-4 to 0) at each checkpoint from the \texttt{SF-AdamW} run described in the previous experiment. This decay phase---starting from a small LR---is designed to make minimal progress along the river direction while substantially reducing the hill component, pulling the iterate toward the valley floor. As a baseline, we apply the same procedure to \texttt{AdamW}. Results are shown in \cref{fig:Obs1}. 

We observe that applying a decay phase of \texttt{AdamW} to the \texttt{SF-AdamW} trajectory results in minimal additional loss reduction, in contrast to the \texttt{AdamW} trajectory where the decay phase leads to a sharp drop in loss. This suggests that \texttt{SF-AdamW} already closely tracks the river throughout training, without requiring LR decay or weight averaging.

To further support this interpretation, we measure the loss along linear interpolations between the 2B and 2.5B token checkpoints under three training regimes: (1) \texttt{AdamW} with a constant learning rate (LR), (2) \texttt{AdamW} with a linear LR decay to zero, and (3) \texttt{SF-AdamW} with a constant LR. As shown in \autoref{fig:river_valley_loss_profile}, the resulting loss curves display qualitatively distinct behaviors: (1) shows a convex, valley-shaped profile, indicating oscillation across the valley; (2) shows a sharp, monotonic decline, consistent with a transition from the valley wall to the floor; and (3) shows a flat, slow decline, suggesting that the trajectory is already closely aligned with the valley floor. Notably, cases (1) and (2) replicate the loss profiles reported in Figure~7 of \citet{wen2024understanding}, while (3) offers further evidence that \texttt{SF-AdamW} remains near the river throughout training.

\begin{figure}[t]
    \begin{subfigure}{0.33\linewidth}
        \centering
        \includegraphics[width=\linewidth]{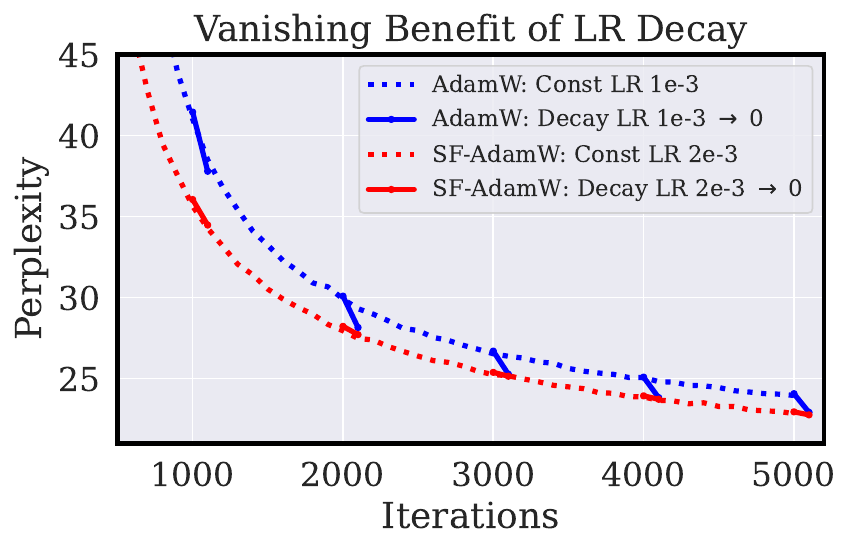}
        \label{fig:LR_decay_slimpajama}
    \end{subfigure}
    \begin{subfigure}{0.33\linewidth}
        \centering
        \includegraphics[width=\linewidth]{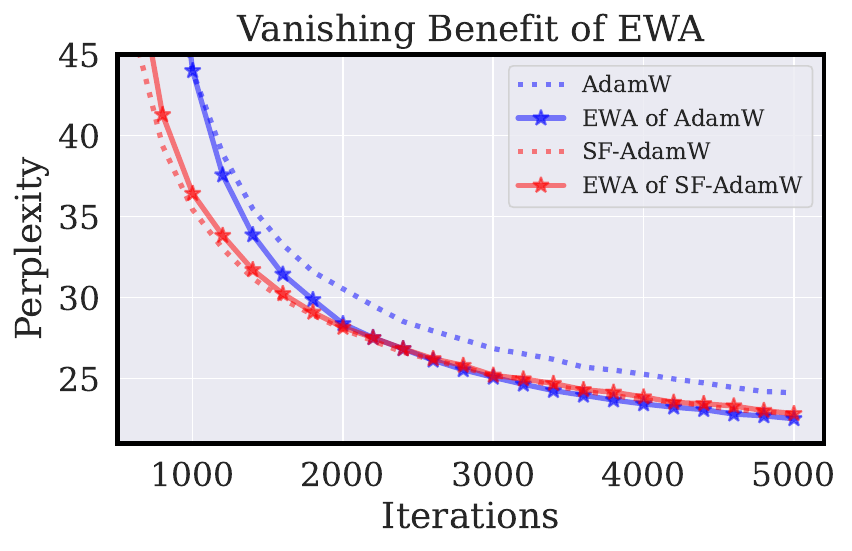}
        \label{fig:EWA_slimpajama}
    \end{subfigure}
    \begin{subfigure}{0.33\linewidth}
        \centering
        \includegraphics[width=\linewidth]{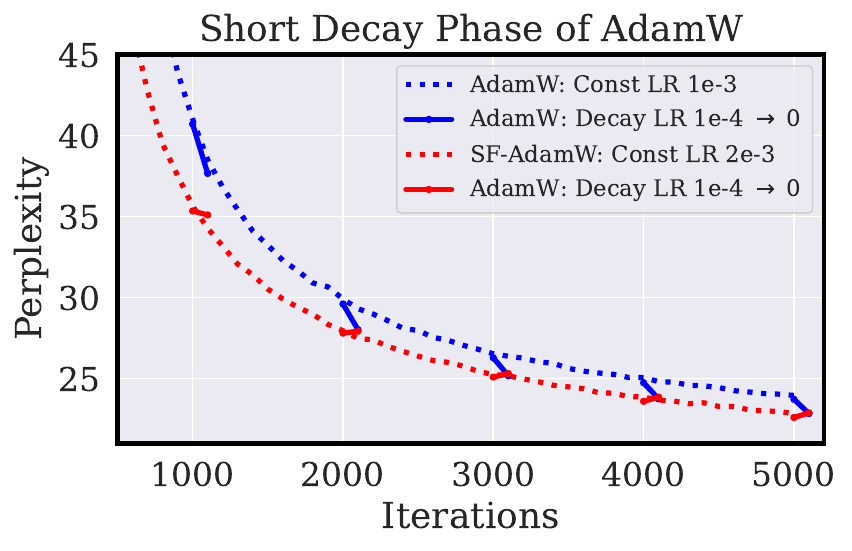}
        \label{fig:Decay_AdamW_slimpajama}
    \end{subfigure}
    \vspace{-20pt}
    \caption{\textbf{SF-AdamW closely follows the river, unlike AdamW.} \textbf{Left, Middle:} While \texttt{AdamW} benefits from linear LR decay and EWA, \texttt{SF-AdamW} shows no improvement from either. \textbf{Right:} A short decay phase of \texttt{AdamW} (with linear LR decay from 1e-4 to 0) leads to a sharp loss drop for \texttt{AdamW}, but has no effect when applied to the \texttt{SF-AdamW} trajectory---suggesting that \texttt{SF-AdamW} already tracks the river throughout training (\cref{obs:sf_river}).}
    \label{fig:Obs1}
    \vspace{-10pt}
\end{figure}

\begin{figure}[!tbh]
    \vspace{-5pt}
    \centering
    \includegraphics[width=\linewidth]{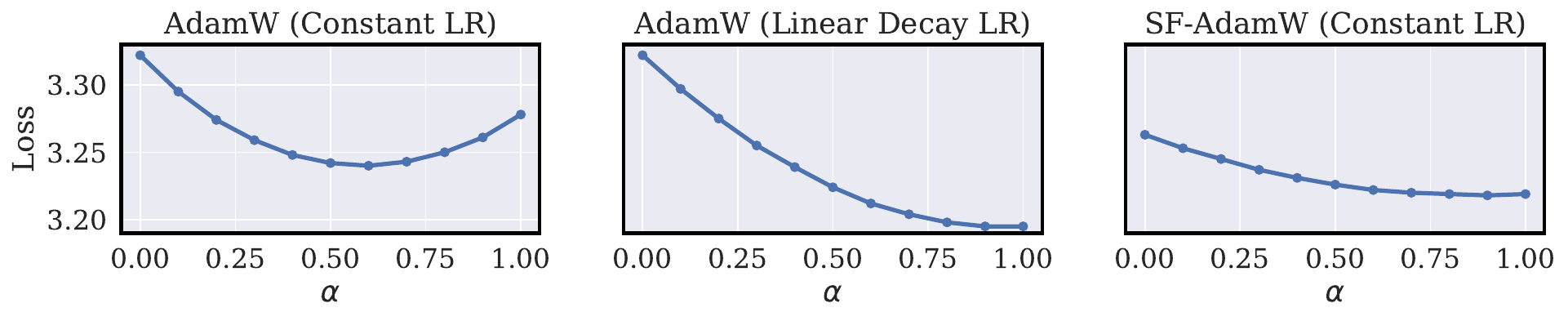}
    \vspace{-15pt}
    \caption{\textbf{Linear interpolation between training checkpoints.}  We evaluate the loss along linear interpolations $\alpha \bw_{t_1} + (1{-}\alpha)\bw_{t_2}$, where $\alpha \in [0, 1]$ and $t_1$, $t_2$ denote the 2B and 2.5B token checkpoints, respectively.  We compare three training regimes: (1) \texttt{AdamW} with constant learning rate (LR), (2) \texttt{AdamW} with a linear LR decay to zero, and (3) \texttt{SF-AdamW} with constant LR.  For all settings, the first 2B tokens are trained using either constant-LR \texttt{AdamW} (for 1 and 2) or constant-LR \texttt{SF-AdamW} (for 3).  The resulting curves exhibit qualitatively distinct behaviors: convex (valley-shaped) for (1), sharp monotonic decay for (2), and flat, slowly declining loss for (3) (\cref{obs:sf_river}).}
    \vspace{-10pt}
\label{fig:river_valley_loss_profile}
\end{figure}

\newobservation[obs:sf_river]{\texttt{SF-AdamW} can follow the river without LR decay or weight averaging.}

\begin{wrapfigure}{r}{0.4\textwidth}
    \vspace{-5pt}  
    \centering
     \includegraphics[width=\linewidth]{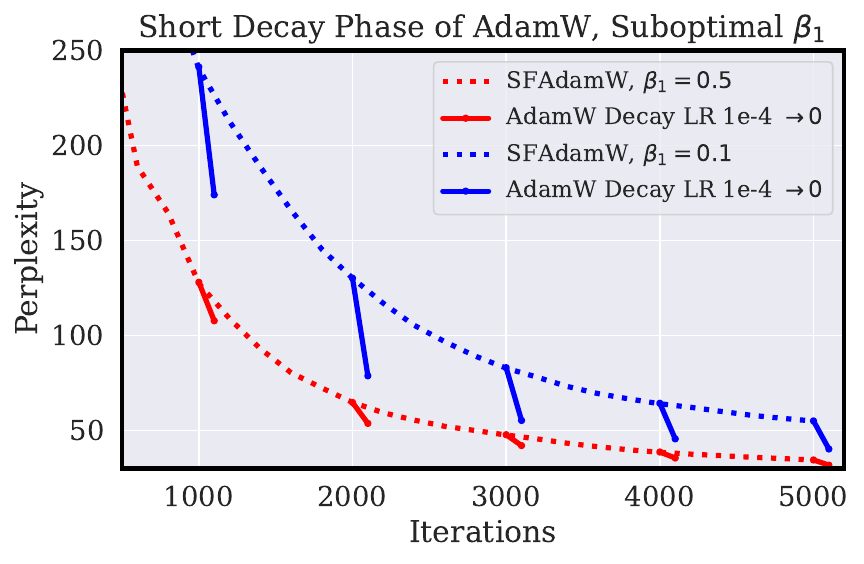}
    \vspace{-15pt}  
    \caption{\textbf{SF-AdamW with suboptimal momentum fails to follow the river.}  A short decay phase of \texttt{AdamW} applied to \texttt{SF-AdamW} checkpoints with $\beta_1 \in \{0.1, 0.5\}$ results in a sharp loss drop, unlike the case with $\beta_1 = 0.95$ (\cref{obs:sensitive-momentum}).}
    \label{fig:Decay_AdamW_betas}
    \vspace{-20pt}
\end{wrapfigure}

\paragraph{Sensitivity to Momentum Parameters.} 

Despite its strong empirical performance, SF method is highly sensitive to the choice of momentum parameters. For example, \citet{hagele2024scaling} report that \texttt{SF-AdamW} with $(\beta_1, \beta_2) = (0.9, 0.95)$ performs significantly worse in their pretraining setup, even exhibiting rising loss toward the end of training, whereas $(0.95, 0.99)$ leads to strong results. This sensitivity contrasts with the theoretical analysis of \citet{defazio2024road}, which shows that the SF method is worst-case optimal for any momentum setting in convex Lipschitz problems.

To further investigate this gap, we repeat our experiment using \texttt{SF-AdamW} with suboptimal momentum $\beta_1 \in \{0.1, 0.5\}$. As before, we periodically save checkpoints and apply a short \texttt{AdamW} decay phase at each. Unlike the optimal case ($\beta_1=0.95$), we observe that the decay phase improves performance, suggesting that suboptimal momentum disrupts the optimizer's ability to follow the river. Results are shown in \cref{fig:Decay_AdamW_betas}.

\newobservation[obs:sensitive-momentum]{\texttt{SF-AdamW} is highly sensitive to momentum; poor choices can prevent it from reaching and following the river.}

These findings lead to the following central question:
\begin{center}
\emph{Why does a well-tuned Schedule-Free method successfully follow the river, and\\ what makes this behavior so sensitive to momentum?}
\end{center}
We address this question in the next section by analyzing the training dynamics of the SF method.

\vspace{-5pt}
\section{Understanding the Training Dynamics of Schedule-Free Optimizer} \label{sec:sf_understand}
\vspace{-5pt}

\subsection{Warmup: A Toy Model}\label{sec:toy}
\vspace{-3pt}

To build intuition, we start by studying the following simple two-dimensional objective:
\[
f(\bw) = \frac{1}{2}(w^{(1)}w^{(2)}-1)^2 + \log(1+\exp(-w^{(1)}))\;,
\]
where $\bw = (w^{(1)}, w^{(2)})\in \mathbb R^2$.
As shown in \cref{fig:river-valley}, this objective exhibits a \emph{river-valley} structure: the curve $w^{(1)}w^{(2)} = 1$ forms a river, along which the loss slowly decreases as $w^{(1)}$ increases. We use this toy model to visualize the training dynamics of \texttt{SF-AdamW} in a controlled setting.

We run \texttt{SF-AdamW} with a constant LR, initialized at $\mathbf x_1 =  (2,2)$, fixing $\beta_2 = 0.99$ and varying $\beta_1 \in \{0.1, 0.5, 0.9\}$. For each run, we track the iterates $(\bx_t, \by_t, \bz_t)$. Results are shown in \cref{fig:toy_model}.

We observe that the $\by_t$ iterates---where the gradient is evaluated---oscillate around the river across all momentum settings, with the center of oscillation remaining close to the river. In contrast, the $\bx_t$ iterates fail to track the river for suboptimal values of $\beta_1$ ($0.1$ and $0.5$), whereas $\beta_1 = 0.9$ results in a trajectory that remain closely aligned with the river.

This behavior aligns with \cref{obs:sensitive-momentum}, where we observed that \texttt{SF-AdamW} fails to follow the river under suboptimal momentum configurations in language model training. Notably, even when $\beta_1$ is suboptimal, the $\by_t$ iterates continue to track the river on average, despite exhibiting oscillations (we revisit the nature of this oscillation in \cref{sec:eos}). These observations suggest that the $\by_t$ sequence is more robust to $\beta_1$ and better aligned with the river geometry than the $\bx_t$ iterates, making it a more reliable foundation for analyzing and guiding optimization in SF dynamics. We investigate this further in the context of language model training.

\begin{figure}[t]
    \vspace{-5pt}
    \centering
    \includegraphics[width=\linewidth]{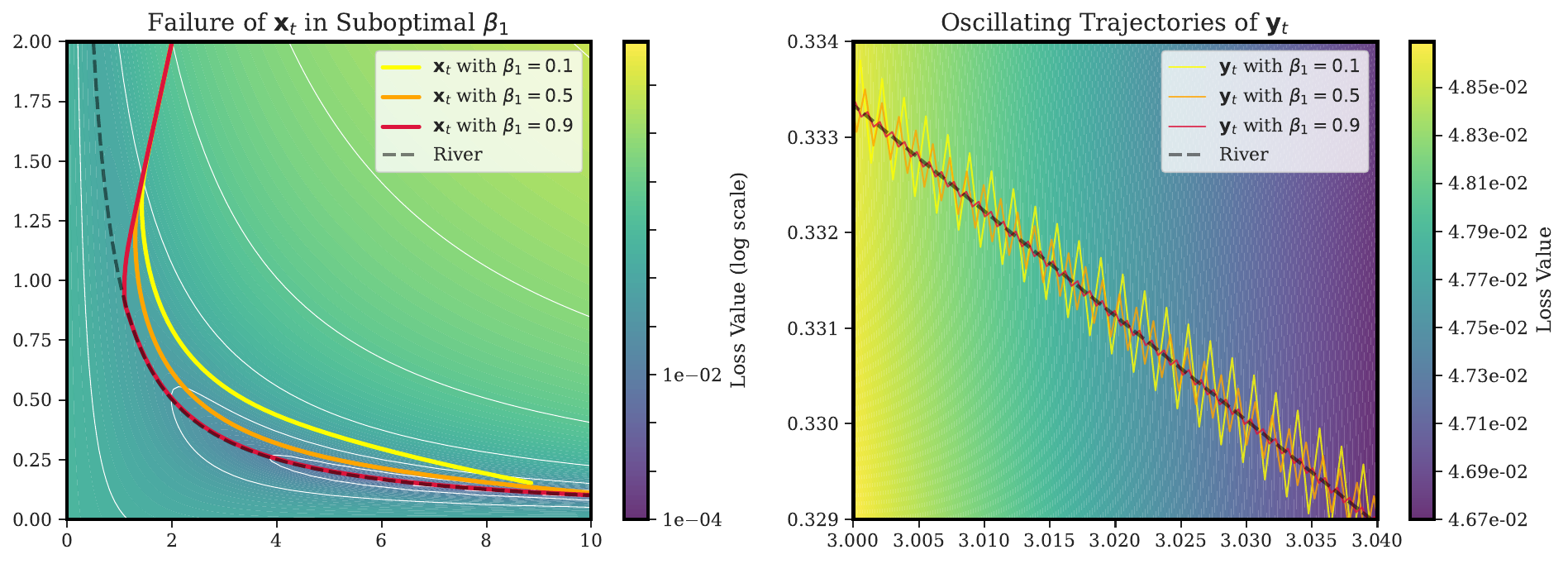}
    \caption{\textbf{SF-AdamW on toy model.} \textbf{Left:} The $\bx_t$ iterates fail to follow the river for $\beta_1 \in \{0.1, 0.5\}$ (\cref{obs:sensitive-momentum}). \textbf{Right:} The $\by_t$ iterates oscillate around the river but track it reliably on average, even for suboptimal values of $\beta_1$ (\cref{obs:y}). As $\beta_1$ increases, the oscillations shrink.}
    \label{fig:toy_model}
    \vspace{-15pt}
\end{figure}

\vspace{-3pt}
\subsection{Language Model Training} \label{sec:language_y}
\vspace{-3pt}

We evaluate the loss at the $\by_t$ iterates, as well as the EWA of $\by_t$, using the same experimental runs from \cref{sec:sf_critiques} with $\beta_1 \in \{0.1, 0.5, 0.95\}$. Results are shown in \cref{fig:EMA_y}. Notably, for suboptimal $\beta_1$, we observe that the loss at $\by_t$ is consistently lower than that at $\bx_t$, showing that $\by_t$ more faithfully follows the river geometry and remains robust to suboptimal momentum settings. This mirrors our findings in the toy model analysis (\cref{sec:toy}), where $\bx_t$ failed to follow the river under suboptimal momentum, while $\by_t$ continued to track it.
Moreover, the EWA of $\by_t$ consistently achieves lower loss than the raw $\by_t$ iterates across all momentum configurations---unlike the $\bx_t$ iterates, where EWA offers no benefit for $\beta_1=0.95$. 
It illustrates that the EWA of $\by_t$ consistently remains closer to the river compared to the vanilla $\by_t$ iterates. This observation parallels the oscillatory behavior of $\by_t$ in the toy model, where the EWA would more closely align with the underlying river-like geometry.

\begin{figure}[!tbh]
    \vspace{-5pt}
    \centering
    \includegraphics[width=\linewidth]{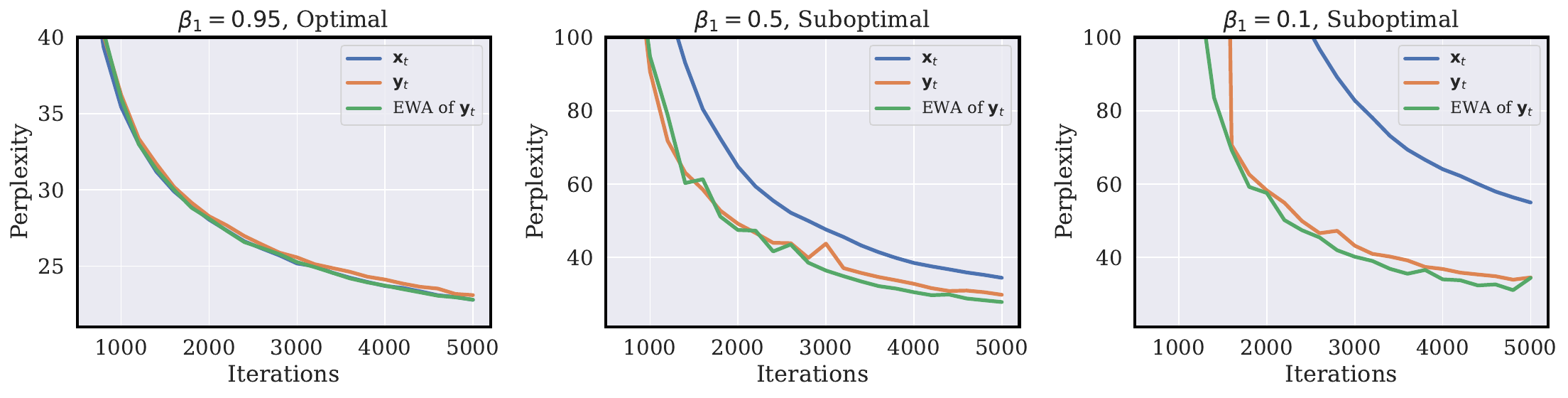}
    \vspace{-15pt}
    \caption{\textbf{Performance of $\bx_t$, $\by_t$, and the EWA of $\by_t$ under varying $\beta_1$.} For suboptimal $\beta_1$, $\by_t$ outperforms $\bx_t$, and across all momentum settings, the EWA of $\by_t$ achieves the lowest loss (\cref{obs:y}).}
    \label{fig:EMA_y}
    \vspace{-5pt}
\end{figure}

\newobservation[obs:y]{In \texttt{SF-AdamW}, the $\by_t$ iterates remain well aligned with the river geometry of the loss landscape, even under suboptimal momentum settings, whereas $\bx_t$ may deviate.
}

\vspace{-5pt}
\subsection{Schedule-Free Methods at the Edge of Stability}\label{sec:eos}
\vspace{-3pt}

We return to the toy model from \cref{sec:toy} to further analyze the optimization dynamics of the $\by_t$ iterates. Notably, the $\by_t$ sequence exhibits a period-two oscillation centered along the river (\cref{fig:toy_model}). 
This behavior resembles the dynamics of full-batch GD at the \emph{Edge of Stability} (EoS), where iterates oscillate along sharp directions while remaining globally stable. The EoS phenomenon was first identified empirically by \citet{cohen2021gradient} and has since been studied theoretically~\citep{arora2022understanding, wang2022analyzing, ahn2023learning, damian2023selfstabilization, song2023trajectory, zhu2023understanding}. A key feature of this regime is that the sharpness---measured by the largest Hessian eigenvalue---stabilizes near the threshold $2/\gamma$.

We now make this connection precise by analyzing SF dynamics through the lens of EoS.

\vspace{-2pt}
\textbf{Notation.}~
Let $f(\bw)$ be the objective and $\mathbf H(\bw)$ its Hessian at $\bw$. For brevity, write $\mathbf H_t \coloneqq \mathbf H(\bw_t)$. The largest eigenvalue $\lambda_1(\mathbf H)$ is called the \emph{sharpness}, and for a given preconditioner $\mathbf P$, $\lambda_1(\mathbf P^{-1}\mathbf H)$ is called the \emph{preconditioned sharpness}.

We first study the stability of Schedule-Free GD (\texttt{SF-GD}) on quadratic objectives, which serve as local Taylor approximations to neural network training losses. \texttt{SF-GD} is defined by \eqref{eq:schefree} with \( \Delta_t \triangleq \nabla f(\by_t) \).

\begin{restatable}[Stability Threshold of \texttt{SF-GD}]{proposition}{stabilityGD}\label{prop:stability_sfgd}
    Consider running \textnormal{\texttt{SF-GD}} on a quadratic objective $f(\bw) = \tfrac{1}{2}\bw^\top \mathbf H \bw + \bg^\top\bw + c$. If $\lambda_1(\mathbf H) > \tfrac{2}{(1-\beta)\gamma}$, then the iterates $\{(\bx_t, \by_t, \bz_t)\}$ diverge.
\end{restatable}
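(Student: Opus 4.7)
My plan is to reduce the multi-dimensional quadratic to a collection of scalar recursions via the eigenbasis of $\mathbf H$, exploit the running-average closed form of $\bx_t$ to collapse the dynamics into a single second-order linear recurrence in $S_t \coloneqq \sum_{s \le t} z_s$, and then read off divergence from its characteristic roots. Since $\mathbf H$ is symmetric, I would work in an orthonormal eigenbasis; after shifting coordinates to center the minimum (which eliminates $\bg$), the SF-GD dynamics decouple across eigenvectors because $\Delta_t = \mathbf H \by_t + \bg$ is linear in $\by_t$. It therefore suffices to prove divergence of the scalar dynamics on $f(w) = \tfrac12 \lambda w^2$ whenever $\lambda > 2/((1-\beta)\gamma)$, under the generic assumption that the initialization projects nontrivially onto the top eigenvector of $\mathbf H$.

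The key simplification is that the $x$-update rearranges to $t x_t = (t-1) x_{t-1} + z_t$, which telescopes (using $x_1 = z_1$) to
\begin{equation*}
x_t = \frac{1}{t}\sum_{s=1}^{t} z_s.
\end{equation*}
Substituting $y_t = (1-\beta)z_t + \beta x_t$ and $\nabla f(y_t) = \lambda y_t$ into the $z$-update then yields the scalar recursion
\begin{equation*}
z_{t+1} = (1-a)\, z_t - \frac{b}{t}\sum_{s=1}^{t} z_s, \qquad a \coloneqq \gamma\lambda(1-\beta),\; b \coloneqq \gamma\lambda\beta.
\end{equation*}
Letting $S_t \coloneqq \sum_{s=1}^{t} z_s$ and using $z_t = S_t - S_{t-1}$ converts this to the second-order linear recurrence
\begin{equation*}
S_{t+1} = \Bigl(2 - a - \tfrac{b}{t}\Bigr) S_t - (1-a)\, S_{t-1}.
\end{equation*}

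The unperturbed characteristic polynomial $\mu^2 - (2-a)\mu + (1-a) = (\mu-1)(\mu-(1-a))$ has roots $1$ and $1-a$, so the hypothesis $|1-a| > 1$—equivalently $\lambda > 2/((1-\beta)\gamma)$—produces a geometrically growing mode. I would close by arguing that the $\tfrac{b}{t}$ term is a benign perturbation via a variation-of-constants ansatz $S_t = R_t(1-a)^t$, whose dominant balance gives $R_t \sim t^{\beta/(1-\beta)}$. This forces $|S_t|$, $|z_t| = |S_t - S_{t-1}|$, and $|x_t| = |S_t|/t$ to diverge, and $\by_t = (1-\beta)\bz_t + \beta\bx_t$ inherits the divergence.

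The main obstacle is this last step: because $c_t = 1/t$ makes the system genuinely non-autonomous, standard spectral analysis for linear time-invariant systems does not directly apply, and one must rule out the scenario in which the $O(1/t)$ correction conspires to cancel the unstable geometric mode. I expect the cleanest fix is to turn the variation-of-constants ansatz into an honest inductive lower bound on $|S_t|$, leveraging that the perturbation strength $b/t$ decays to zero so that, past some threshold $t_0$, the recurrence is dominated by its autonomous part and $|S_t|/|1-a|^t$ is eventually monotone.
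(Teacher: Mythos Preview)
Your approach is essentially the same as the paper's: both project onto the top eigenvector, derive a second-order linear time-varying recurrence, and read off divergence from the roots of the limiting autonomous recurrence. Your variable $S_t=\sum_{s\le t}z_s$ is just $t\,\tilde x_t$ in the paper's notation (they work directly with the shifted projection $\tilde x_t=\mathbf q^\top\bx_t+\tfrac{1}{\lambda}\mathbf q^\top\bg$), so the recurrences coincide after this rescaling. Your factorization $(\mu-1)(\mu-(1-a))$ is in fact cleaner than the paper's root computation, which carries the discriminant unsimplified and splits into a complex-root case that never actually occurs. On the non-autonomous step you flag as the main obstacle: the paper handles it at exactly the same heuristic level, simply asserting that ``its asymptotic behavior is governed by the limiting recurrence relation'' without the inductive argument you propose, so your plan is at least as rigorous as the published proof.
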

\vspace{-3pt}

Notably, this stability threshold is scaled by a factor of $(1 - \beta)^{-1}$ compared to the threshold for standard GD. This result is consistent with empirical observations from \citet{defazio2024road}, which report that SF methods allow the use of larger LRs, particularly when $\beta$ is close to one.

Next, we extend the analysis to \texttt{SF-PrecondGDW}, defined by \eqref{eq:schefree} with $\Delta_t \triangleq \mathbf P^{-1} \nabla f(\by_t)+\lambda\by_t$, where $\mathbf{P}$ is a fixed, symmetric, and positive definite preconditioner, and $\lambda$ denotes the weight decay coefficient. This setting parallels the analysis of \emph{Adaptive Edge of Stability} in \citet{cohen2022adaptive}.

\begin{restatable}[Stability Threshold of \texttt{SF-PrecondGDW}]{proposition}{stabilityPreconGD}\label{prop:stability_sfpgd}
Consider running \textnormal{\texttt{SF-PrecondGDW}} on $f(\bw) = \tfrac{1}{2}\bw^\top \mathbf H \bw + \bg^\top\bw + c$. If $\lambda_1(\mathbf P^{-1}\mathbf H) > \tfrac{2}{(1-\beta)\gamma}-\lambda$, then the iterates $\{(\bx_t, \by_t, \bz_t)\}$ diverge.
\end{restatable}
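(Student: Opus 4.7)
The plan is to reduce Proposition 4.2 to Proposition 4.1 by a linear change of variables that simultaneously absorbs the weight-decay term into an augmented quadratic and symmetrizes the preconditioned Hessian. The key observation is that the \texttt{SF} updates for $\bx_t$ and $\by_t$ are affine-linear in $(\bx_{t-1}, \bz_t)$, so any invertible linear transformation of the iterates preserves the \texttt{SF} structure; this lets us transport the divergence claim through the change of coordinates.

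First I would absorb the weight decay by defining the augmented quadratic $\hat f(\bw) := f(\bw) + \tfrac{\lambda}{2}\bw^\top \mathbf P \bw$, so that $\mathbf P^{-1}\nabla \hat f(\by_t) = \mathbf P^{-1}\nabla f(\by_t) + \lambda \by_t = \Delta_t$. Thus \texttt{SF-PrecondGDW} on $f$ with weight decay $\lambda$ is identical to \texttt{SF-PrecondGD} (no decay) applied to $\hat f$, whose Hessian is $\mathbf H + \lambda \mathbf P$. Next I would apply the symmetrizing change of variables $\tilde \bw := \mathbf P^{1/2} \bw$, setting $\tilde \bx_t, \tilde \by_t, \tilde \bz_t$ analogously. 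Because the $x$- and $y$-updates are linear in $(\bx_{t-1}, \bz_t)$ and $(\bx_t, \bz_t)$ respectively, they carry over verbatim to the tilde iterates, and a direct substitution in the $z$-update yields
\[
\tilde \bz_{t+1} = \tilde \bz_t - \gamma \bigl( \tilde{\mathbf H}\,\tilde \by_t + \mathbf P^{-1/2}\bg \bigr),
\qquad \tilde{\mathbf H} := \mathbf P^{-1/2}\mathbf H\mathbf P^{-1/2} + \lambda \mathbf I,
\]
which is exactly \texttt{SF-GD} on a quadratic with the symmetric Hessian $\tilde{\mathbf H}$.

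Finally, I would invoke Proposition 4.1 to conclude that the tilde iterates diverge whenever $\lambda_1(\tilde{\mathbf H}) > 2/((1-\beta)\gamma)$. Using the similarity $\mathbf P^{-1/2}\mathbf H\mathbf P^{-1/2} = \mathbf P^{1/2}(\mathbf P^{-1}\mathbf H)\mathbf P^{-1/2}$, these two matrices share spectra, so $\lambda_1(\tilde{\mathbf H}) = \lambda_1(\mathbf P^{-1}\mathbf H) + \lambda$, which rearranges precisely to the stated threshold. Since $\mathbf P^{1/2}$ is invertible, divergence of $\tilde \bz_t$ (resp.\ $\tilde \bx_t, \tilde \by_t$) is equivalent to divergence of $\bz_t$ (resp.\ $\bx_t, \by_t$), completing the argument.

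The main obstacle is the algebraic bookkeeping in verifying that the linear change of variables commutes with all three \texttt{SF} updates and that absorbing the $\lambda \by_t$ term into the preconditioned gradient yields the correct effective Hessian $\tilde{\mathbf H}$; both are routine but notation-heavy, and one must in particular confirm that the affine term $\mathbf P^{-1/2}\bg$ does not affect divergence (handled, as in Proposition 4.1, by translating to the fixed point of the deterministic linear recursion whenever $\tilde{\mathbf H}$ is invertible). Once this reduction is in place, no dynamical analysis beyond Proposition 4.1 is required.
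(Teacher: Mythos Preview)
Your proposal is correct and follows essentially the same route as the paper: absorb the weight-decay term into an augmented quadratic $g(\bw)=f(\bw)+\tfrac{\lambda}{2}\|\mathbf P^{1/2}\bw\|^2$ so that \texttt{SF-PrecondGDW} on $f$ becomes \texttt{SF-PrecondGD} on $g$, then apply the reparameterization $\tilde\bw=\mathbf P^{1/2}\bw$ to reduce to \texttt{SF-GD} on a quadratic with Hessian $\mathbf P^{-1/2}\mathbf H\mathbf P^{-1/2}+\lambda\mathbf I$, and invoke Proposition~4.1 together with the similarity $\mathbf P^{-1/2}\mathbf H\mathbf P^{-1/2}\sim\mathbf P^{-1}\mathbf H$. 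The paper isolates the reparameterization step as a separate lemma, but the argument is otherwise identical.
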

\vspace{-3pt}

Proofs of \cref{prop:stability_sfgd,prop:stability_sfpgd} are deferred to \cref{appendix:eos}.

\texttt{SF-AdamW} can be viewed as \texttt{SF-PrecondGDW} with with a slowly varying diagonal preconditioner $\mathbf P_t$. Hence, its stability is governed by the preconditioned sharpness $\lambda_1(\mathbf P_t^{-1}\mathbf H_t)$. As shown in \cref{fig:eos}, the preconditioned sharpness at $\by_t$ iterates equilibrates near the stability threshold in both the toy model and CIFAR-10 experiments---exhibiting a typical EoS behavior.

\begin{figure}[t]
    \vspace{-15pt}
    \begin{subfigure}{0.32\linewidth}
        \centering
        \includegraphics[width=\linewidth]{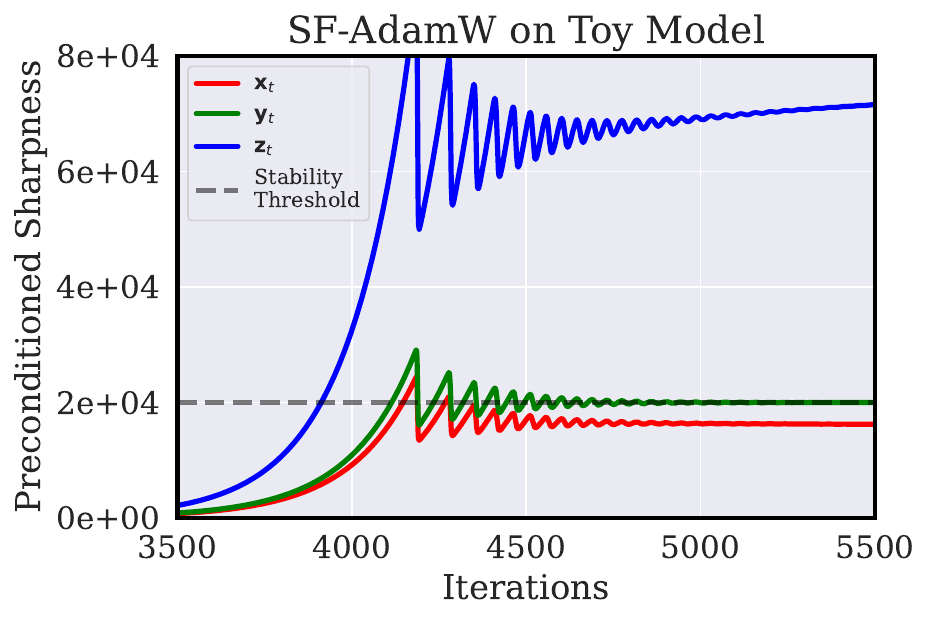}
        
    \end{subfigure}
    \begin{subfigure}{0.3\linewidth}
        \centering
        \includegraphics[width=\linewidth]{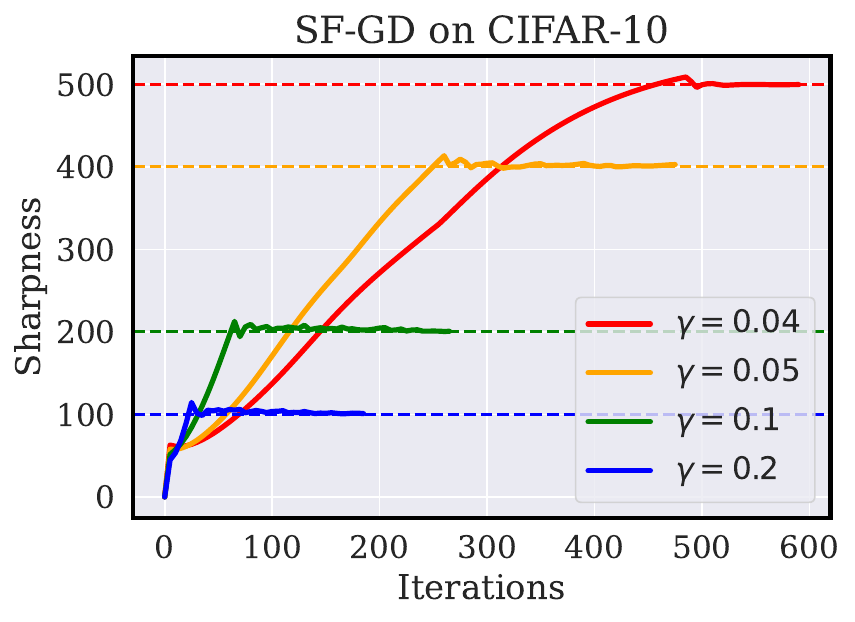}
         
    \end{subfigure}
    \begin{subfigure}{0.3\linewidth}
        \centering
        \includegraphics[width=\linewidth]{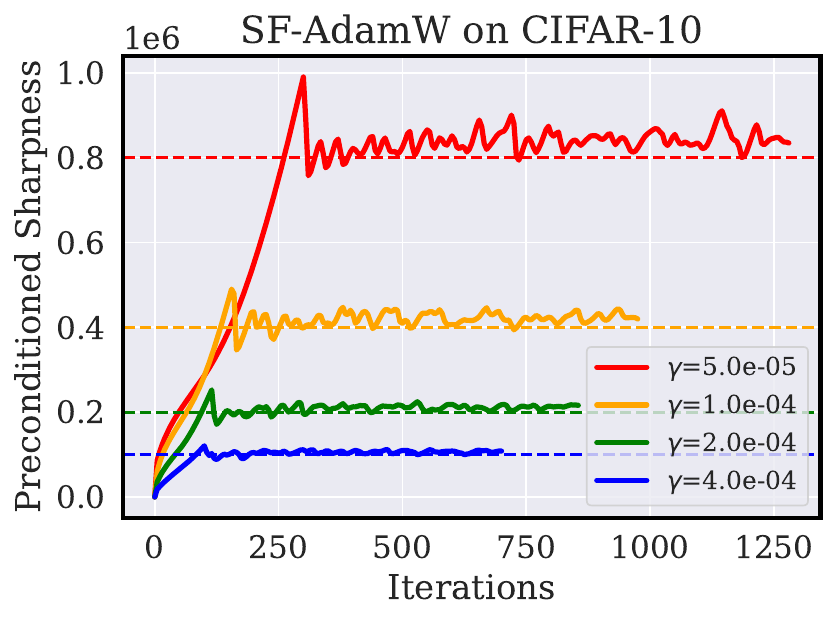}
    \end{subfigure}
    \centering
    \caption{\textbf{The $\by_t$ iterates of Schedule-Free methods operate at the Edge of Stability.}   Plots of (preconditioned) sharpness during full-batch training; dashed lines indicate stability thresholds. \textbf{Left:} Toy model trained using \texttt{SF-AdamW} with $(\beta_1,\beta_2)=(0.9, 0.99)$.  \textbf{Middle, Right:} Fully connected network trained on a 5k subset of CIFAR-10 using \texttt{SF-GD} with $\beta=0.9$ and \texttt{SF-AdamW} with $(\beta_1, \beta_2)=(0.95, 0.99)$; (preconditioned) sharpness is evaluated at the $\mathbf y_t$ iterates (\cref{obs:eos}).}
    \label{fig:eos}
    \vspace{-10pt}
\end{figure}

\newobservation[obs:eos]{In full-batch settings, Schedule-Free methods operate at the Edge of Stability, with the (preconditioned) sharpness at $\by_t$ hovering around the stability threshold.}

\vspace{-5pt}
\subsection{A Reformulation of Schedule-Free Optimizer}
\vspace{-3pt}

Motivated by \cref{obs:y,obs:eos}, we now examine the dynamics of the $\by_t$ iterates.
Following \citet{defazio2024road} and \citet{morwani2025connections},
define the momentum variable
$\mathbf m_t \!\coloneqq\! \tfrac{\mathbf x_t-\mathbf z_{t+1}}{\gamma}$.
The Schedule-Free update in \eqref{eq:schefree} is then equivalent to
\begin{align}
\mathbf m_t &= (1-c_t)\,\mathbf m_{t-1} + \Delta_t, \notag\\
\mathbf y_{t+1} &= \mathbf y_t
      -\gamma\bigl[\beta c_{t+1}\mathbf m_t + (1-\beta)\Delta_t\bigr], \tag{\texttt{SF}$_\by$}\label{eq:schefree_y}
\end{align}
i.e.\ \texttt{SF} is a momentum-based optimizer update on $\mathbf y_t$. The full derivation is provided in \cref{appendix:sf_reformulation}.

The  $\bx_t$ iterate can then be expressed as
\[
\mathbf x_t
  =\frac{(1-c_t)(1-\beta)\mathbf x_{t-1}+c_t\mathbf y_t}
        {(1-c_t)(1-\beta)+c_t}.
\]
In other words, $\bx_t$ is a weighted average of past $\mathbf y_t$’s.  Hence, we arrive at the following conclusion.

\newobservation[obs:sf_reformulation]{Schedule-Free implicitly
performs weight averaging over momentum iterates \emph{without} storing an
extra model copy.}

\vspace{-5pt}
\subsection{Central Flow Analysis}
\vspace{-3pt}

\citet{cohen2025understanding} observe that, at the EoS, the time-averaged optimization trajectory follows a differential equation called the \emph{central flow}, which characterizes the river that the dynamics trace during training.
We adopt this framework to understand the magnitude of oscillation of $\by_t$ iterates of \texttt{SF-AdamW} along the river.   
In particular, we analyze the scalar surrogate \texttt{SF-ScalarAdam} with an adaptive preconditioner $\nu_t$ updated as $\nu_{t} = \beta_2\nu_{t-1}+(1-\beta_2)\|\nabla f(\by_t)\|^2$. 
Based on the reformulated update \eqref{eq:schefree_y}, and assuming that $c(t) = 1/t$ becomes negligible for sufficiently large $t$, the central flow equations are given by:
\[
\frac{d\mathbf y}{dt}= -\frac{\gamma(1-\beta_1)}{\sqrt{\nu}}
\Bigl[\nabla f(\mathbf y)+\tfrac12\sigma^2\nabla S(\mathbf y)\Bigr],
\qquad
\frac{d\nu}{dt}= \frac{1-\beta_2}{\beta_2}\!
\Bigl[\|\nabla f(\mathbf y)\|^2+\sigma^2S(\mathbf y)^2-\nu\Bigr],
\]
where $S(\mathbf y)=\lambda_{1}\!\bigl(\nabla^{2}f(\mathbf y)\bigr)$ and
$\sigma^2$ is the steady-state variance of oscillations along the hill
direction. Enforcing the stability condition  
$S(\mathbf y)/\sqrt{\nu}=2/[(1-\beta_1)\gamma]$ yields
\[
\sigma^2(\mathbf y)=
\frac{\langle\nabla S,-\nabla f\rangle+\frac{1-\beta_2}{\beta_2}\Bigl[\frac14
S^2-\frac{\|\nabla f\|^2}{(1-\beta_1)^2\gamma^2}\Bigr]}
     {\frac12\|\nabla S\|^2+\frac{1-\beta_2}{(1-\beta_1)^2\beta_2\gamma^2}S^2}.
\]
As $\beta_1$ increases, $\sigma^2$ decreases; thus, larger values of $\beta_1$ suppress oscillations along the hill directions, keeping the $\mathbf y_t$ iterates more closely aligned with the river—consistent with the empirical observation in \cref{fig:toy_model}. A complete derivation, including the central flow of \texttt{SF-GD}, is provided in \cref{appendix:central_flow}.

\vspace{-5pt}
\section{A Refined and Robust Schedule-Free Optimizer} \label{sec:refined}
\vspace{-5pt}

\begin{figure}[b]
    \vspace{-15pt}
    \centering
    \includegraphics[width=\linewidth]{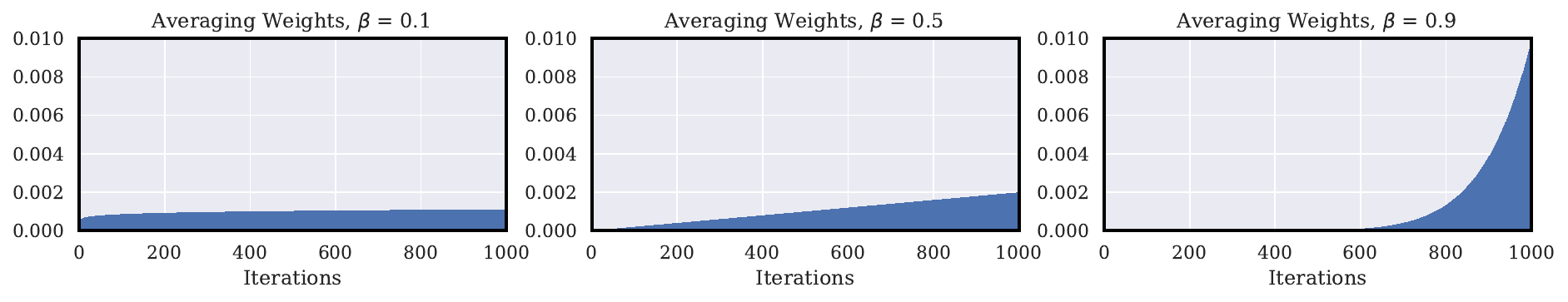}
    \vspace{-15pt}
    \caption{\textbf{Averaging weights in SF method.} Smaller values of $\beta$ flatten the averaging weights $\{\alpha_t\}$.}
    \label{fig:weights_histogram}
    \vspace{-15pt}
\end{figure}

While \texttt{SF-AdamW} achieves strong performance, it is highly sensitive to momentum hyperparameters and degrades under large batch sizes~\citep{zhang2025how, morwani2025connections}. 
Building on the insights from \cref{sec:sf_understand}, we revisit these issues and propose a refined variant that addresses them. 

A key limitation in the vanilla \eqref{eq:schefree} setup ($c_t=1/t$) is that $\beta$ simultaneously controls both (i) the momentum applied to $\mathbf y_t$~\eqref{eq:schefree_y} and (ii) the implicit averaging window that defines $\mathbf x_t$:
\[
\mathbf x_T
  =\sum_{t=1}^{T}\!\alpha_t\,\mathbf y_t,\quad
  \alpha_t:=\frac{c_t}{(1-c_t)(1-\beta)+c_t}\!\prod_{s=t+1}^{T}\!\left[\frac{(1-c_s)(1-\beta)}{(1-c_s)(1-\beta)+c_s}\right].
\]
When $\beta$ is small, the weights $\{\alpha_t\}$ become stretched, overemphasizing early iterates and preventing \(\mathbf x_t\) from closely tracking the river, as shown in \cref{fig:weights_histogram}. This also explains \cref{obs:sf_river}, where we demonstrated that applying EWA to $\bx_t$ offers no benefit: since $\bx_t$ is already a weighted average of $\by_t$, further averaging merely flattens the weights and weakens alignment with the river.

More fundamentally, $\beta$ plays a \emph{dual role}: it controls both the momentum update of $\by_t$ and the width of the averaging window for $\bx_t$. The optimal value for each may differ, and this mismatch can hinder performance. 
In large-batch training, a narrower averaging window (i.e., larger $\beta$) is preferred to emphasize recent iterates. However, a large $\beta$ also slows the update of $\by_t$, as the $(1 - \beta)\Delta_t$ term in \eqref{eq:schefree_y} becomes small, reducing the influence of recent gradients.

\paragraph{Our Refined SF Method.}
We introduce an additional \emph{decoupling} parameter $C$ and redefine $c_t = 1/t$ in \eqref{eq:schefree} as
\[
c_t=\frac{(1-\beta)C}{t},
\quad\textrm { which then leads to }\quad
\alpha_t\;\approx\;\frac{C}{T}\Bigl(\frac{t}{T}\Bigr)^{C-1}.
\]
The full derivation is given in \cref{appendix:refined}, and pseudocode is provided in \cref{alg:sf_adamw_refined}. As shown in \cref{fig:modified_weights_histogram,fig:modified_weights_histogram_T5000}, the weights $\{\alpha_t\}$ closely follow the theoretical approximation across different values of $\beta$, $C$, and $T$. Notably, both vanilla and refined SF use an averaging window that widens with $T$, unlike fixed-width schemes such as EWA.

This modification makes the averaging weights $\{\alpha_t\}$ depend \emph{solely} on $C$, allowing $\beta$ to independently control the momentum on $\mathbf y_t$. In other words, $C$ \emph{decouples} the momentum and averaging behavior. Below, we present preliminary experiments demonstrating the benefits of this refinement.

\vspace{-3pt}
\paragraph{Empirical gains.}
We evaluate the performance of our refinement to \texttt{SF-AdamW} using the experimental setup as in \cref{sec:sf_critiques}. Keeping all other hyperparameters fixed as the tuned vanilla configuration and varying only $C$ (with $C = 1/(1 - \beta_1)$ recovering the original method), we observe:
\begin{itemize}
    \item \textbf{Momentum robustness. }  
  For $\beta_1\in\{0.1,\,0.5\}$, the refined
  \texttt{SF-AdamW} allows $\bx_t$ to match or outperform $\by_t$, in contrast to the underperformance of $\bx_t$ reported in \cref{obs:y}~(\cref{fig:modified}-left).
\item  \textbf{Improved best-case performance. }  
  In the best vanilla setup ($\beta_1 = 0.95$), setting $C = 200$ leads to further reductions in validation loss; similar gains are observed with $\beta_1 = 0.9$~(\cref{fig:modified}-middle).
\item  \textbf{Large-batch setting. }  
  With 2M-token batches, vanilla \texttt{SF-AdamW} ($\beta_1=0.98$) lags behind \texttt{AdamW} with cosine schedule,  whereas setting $C = 200$ matches its final performance~(\cref{fig:modified}-right).
\end{itemize}

\looseness=-1
To further examine the sensitivity to the new hyperparameter $C$, we conduct a set of controlled sweeps (\cref{tab:c_sweep_05M,tab:c_sweep_2M} in \cref{appendix:refined_sensitivity}). Across both optimal and suboptimal $\beta_1$, we find that refined \texttt{SF-AdamW} consistently outperforms the vanilla baseline over a wide range of $C$, demonstraing its robustness to this parameter. These results show that decoupling momentum and averaging via introducing $C$ eliminates the tradeoff inherent in vanilla SF, yielding a more robust and scalable optimizer.

\begin{figure}[t]
    \vspace{-15pt}
    \begin{subfigure}{0.32\linewidth}
        \centering
        \includegraphics[width=\linewidth]{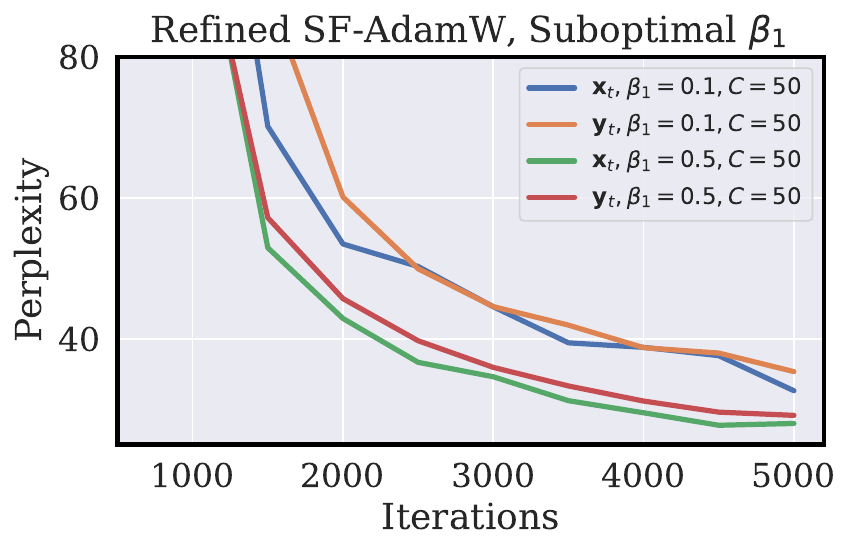}
        \label{fig:modified_robust}
    \end{subfigure}
    \hfill
    \begin{subfigure}{0.32\linewidth}
        \centering
        \includegraphics[width=\linewidth]{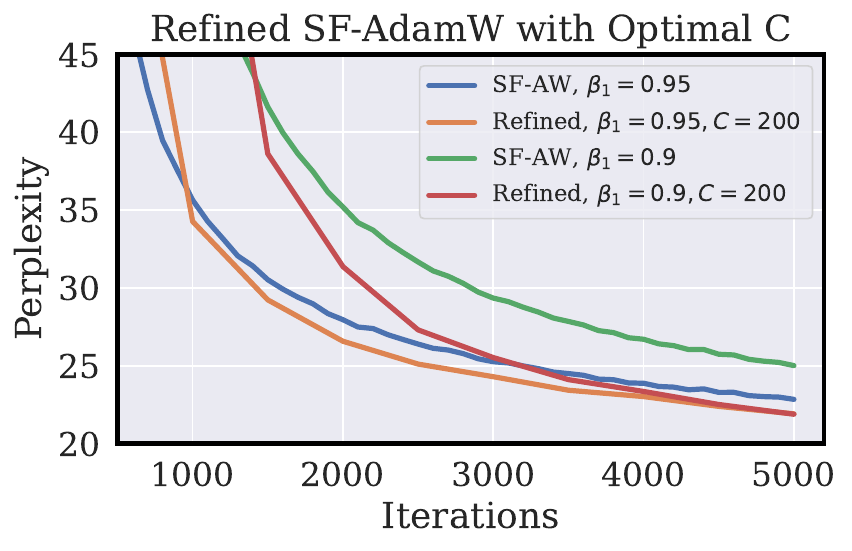}
        \label{fig:modified_improve}
    \end{subfigure}
    \hfill
    \begin{subfigure}{0.32\linewidth}
        \centering
        \includegraphics[width=\linewidth]{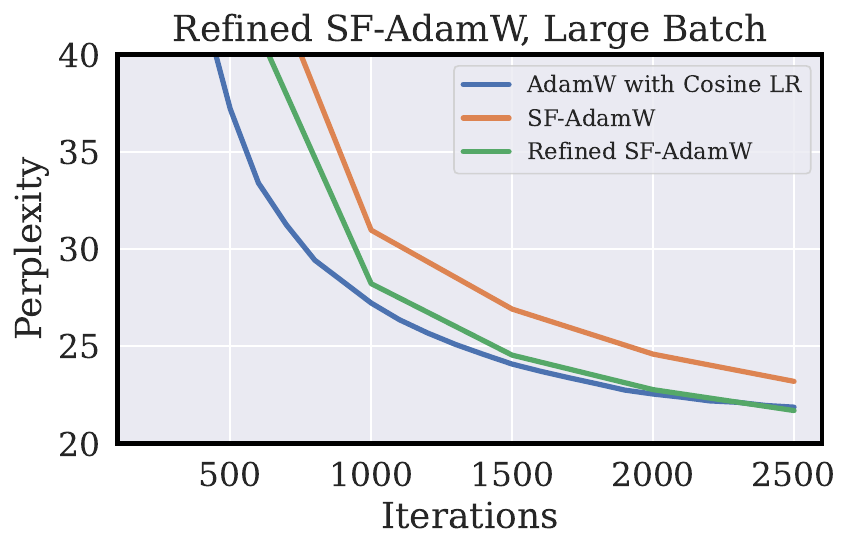}
        \label{fig:modified_large}
    \end{subfigure}
    \caption{\textbf{Refined SF-AdamW.} \textbf{Left:} Performance of $\bx_t$ and $\by_t$ iterates using the refined \texttt{SF-AdamW} with $\beta_1 \in \{0.1, 0.5\}$ and $C = 50$. 
    \textbf{Middle:} Refined \texttt{SF-AdamW} with $\beta_1 \in \{0.95, 0.9\}$ and $C = 200$ achieves improved performance over the best vanilla \texttt{SF-AdamW} run. \textbf{Right:} Under a large batch size (2M tokens), vanilla \texttt{SF-AdamW} with $\beta_1=0.98$ underperforms compared to \texttt{AdamW} with a cosine schedule, while the refined \texttt{SF-AdamW}---with only a sweep over $C = 200$---matches its final performance.}
    \label{fig:modified}
    \vspace{-10pt}
\end{figure}

\vspace{-5pt}
\section{Conclusion}\label{sec:conclusion}
\vspace{-5pt}

We presented a principled view of Schedule-Free (SF) methods by studying its behavior through the geometry of the river-valley loss landscape. Our analysis shows that SF methods naturally follow the river without requiring explicit LR decay or weight averaging, making them a compelling approach for scalable pretraining. We further provided theoretical insights grounded in Edge of Stability and central flow dynamics. Building on this understanding, we proposed a refined variant that decouples momentum from averaging, improving both robustness and performance.

While our findings highlight the potential of SF methods, several open questions remain. Our theoretical analysis relies on simplifying assumptions, and validating the central flow approximation in deep learning is a natural next step. Furthermore, extending the river-valley framework to analyze other modern optimizers, as well as exploring their integration with SF methods, is a promising direction for further investigation. Lastly, our experiments are limited to small-scale language models due to computational constraints. Scaling these findings to larger models and longer training durations remains an important direction for future work.

\begin{ack}
This work was supported by a National Research Foundation of Korea (NRF) grant funded by the Korean government (MSIT) (No.\ RS-2024-00421203) and the InnoCORE program of the Ministry of Science and ICT (No.\ N10250156).
\end{ack}

\bibliography{refs}

\begin{thebibliography}{39}
\providecommand{\natexlab}[1]{#1}
\providecommand{\url}[1]{\texttt{#1}}
\expandafter\ifx\csname urlstyle\endcsname\relax
  \providecommand{\doi}[1]{doi: #1}\else
  \providecommand{\doi}{doi: \begingroup \urlstyle{rm}\Url}\fi

\bibitem[Ahn and Cutkosky(2024)]{ahn2024adam}
Kwangjun Ahn and Ashok Cutkosky.
\newblock Adam with model exponential moving average is effective for nonconvex optimization.
\newblock In \emph{The Thirty-eighth Annual Conference on Neural Information Processing Systems}, 2024.
\newblock URL \url{https://openreview.net/forum?id=v416YLOQuU}.

\bibitem[Ahn et~al.(2023)Ahn, Bubeck, Chewi, Lee, Suarez, and Zhang]{ahn2023learning}
Kwangjun Ahn, Sebastien Bubeck, Sinho Chewi, Yin~Tat Lee, Felipe Suarez, and Yi~Zhang.
\newblock Learning threshold neurons via edge of stability.
\newblock In \emph{Thirty-seventh Conference on Neural Information Processing Systems}, 2023.
\newblock URL \url{https://openreview.net/forum?id=9cQ6kToLnJ}.

\bibitem[Ahn et~al.(2025)Ahn, Magakyan, and Cutkosky]{ahn2024general}
Kwangjun Ahn, Gagik Magakyan, and Ashok Cutkosky.
\newblock General framework for online-to-nonconvex conversion: Schedule-free {SGD} is also effective for nonconvex optimization.
\newblock In Aarti Singh, Maryam Fazel, Daniel Hsu, Simon Lacoste-Julien, Felix Berkenkamp, Tegan Maharaj, Kiri Wagstaff, and Jerry Zhu, editors, \emph{Proceedings of the 42nd International Conference on Machine Learning}, volume 267 of \emph{Proceedings of Machine Learning Research}, pages 772--795. PMLR, 13--19 Jul 2025.
\newblock URL \url{https://proceedings.mlr.press/v267/ahn25a.html}.

\bibitem[Arora et~al.(2022)Arora, Li, and Panigrahi]{arora2022understanding}
Sanjeev Arora, Zhiyuan Li, and Abhishek Panigrahi.
\newblock Understanding gradient descent on the edge of stability in deep learning.
\newblock In Kamalika Chaudhuri, Stefanie Jegelka, Le~Song, Csaba Szepesvari, Gang Niu, and Sivan Sabato, editors, \emph{Proceedings of the 39th International Conference on Machine Learning}, volume 162 of \emph{Proceedings of Machine Learning Research}, pages 948--1024. PMLR, 17--23 Jul 2022.
\newblock URL \url{https://proceedings.mlr.press/v162/arora22a.html}.

\bibitem[Cohen et~al.(2021)Cohen, Kaur, Li, Kolter, and Talwalkar]{cohen2021gradient}
Jeremy Cohen, Simran Kaur, Yuanzhi Li, J~Zico Kolter, and Ameet Talwalkar.
\newblock Gradient descent on neural networks typically occurs at the edge of stability.
\newblock In \emph{International Conference on Learning Representations}, 2021.
\newblock URL \url{https://openreview.net/forum?id=jh-rTtvkGeM}.

\bibitem[Cohen et~al.(2025)Cohen, Damian, Talwalkar, Kolter, and Lee]{cohen2025understanding}
Jeremy Cohen, Alex Damian, Ameet Talwalkar, J~Zico Kolter, and Jason~D. Lee.
\newblock Understanding optimization in deep learning with central flows.
\newblock In \emph{The Thirteenth International Conference on Learning Representations}, 2025.
\newblock URL \url{https://openreview.net/forum?id=sIE2rI3ZPs}.

\bibitem[Cohen et~al.(2022)Cohen, Ghorbani, Krishnan, Agarwal, Medapati, Badura, Suo, Cardoze, Nado, Dahl, et~al.]{cohen2022adaptive}
Jeremy~M Cohen, Behrooz Ghorbani, Shankar Krishnan, Naman Agarwal, Sourabh Medapati, Michal Badura, Daniel Suo, David Cardoze, Zachary Nado, George~E Dahl, et~al.
\newblock Adaptive gradient methods at the edge of stability.
\newblock \emph{arXiv preprint arXiv:2207.14484}, 2022.

\bibitem[Dahl et~al.(2023)Dahl, Schneider, Nado, Agarwal, Sastry, Hennig, Medapati, Eschenhagen, Kasimbeg, Suo, et~al.]{dahl2023benchmarking}
George~E Dahl, Frank Schneider, Zachary Nado, Naman Agarwal, Chandramouli~Shama Sastry, Philipp Hennig, Sourabh Medapati, Runa Eschenhagen, Priya Kasimbeg, Daniel Suo, et~al.
\newblock Benchmarking neural network training algorithms.
\newblock \emph{arXiv preprint arXiv:2306.07179}, 2023.

\bibitem[Damian et~al.(2023)Damian, Nichani, and Lee]{damian2023selfstabilization}
Alex Damian, Eshaan Nichani, and Jason~D. Lee.
\newblock Self-stabilization: The implicit bias of gradient descent at the edge of stability.
\newblock In \emph{The Eleventh International Conference on Learning Representations}, 2023.
\newblock URL \url{https://openreview.net/forum?id=nhKHA59gXz}.

\bibitem[Davis et~al.(2024)Davis, Drusvyatskiy, and Jiang]{davis2024gradientdescentadaptivestepsize}
Damek Davis, Dmitriy Drusvyatskiy, and Liwei Jiang.
\newblock Gradient descent with adaptive stepsize converges (nearly) linearly under fourth-order growth, 2024.
\newblock URL \url{https://arxiv.org/abs/2409.19791}.

\bibitem[Defazio et~al.(2024)Defazio, Yang, Khaled, Mishchenko, Mehta, and Cutkosky]{defazio2024road}
Aaron Defazio, Xingyu Yang, Ahmed Khaled, Konstantin Mishchenko, Harsh Mehta, and Ashok Cutkosky.
\newblock The road less scheduled.
\newblock \emph{Advances in Neural Information Processing Systems}, 37:\penalty0 9974--10007, 2024.

\bibitem[Gao et~al.(2020)Gao, Biderman, Black, Golding, Hoppe, Foster, Phang, He, Thite, Nabeshima, Presser, and Leahy]{pile}
Leo Gao, Stella Biderman, Sid Black, Laurence Golding, Travis Hoppe, Charles Foster, Jason Phang, Horace He, Anish Thite, Noa Nabeshima, Shawn Presser, and Connor Leahy.
\newblock The {P}ile: An 800gb dataset of diverse text for language modeling.
\newblock \emph{arXiv preprint arXiv:2101.00027}, 2020.

\bibitem[H{\"a}gele et~al.(2024)H{\"a}gele, Bakouch, Kosson, allal, Werra, and Jaggi]{hagele2024scaling}
Alexander H{\"a}gele, Elie Bakouch, Atli Kosson, Loubna~Ben allal, Leandro~Von Werra, and Martin Jaggi.
\newblock Scaling laws and compute-optimal training beyond fixed training durations.
\newblock In \emph{The Thirty-eighth Annual Conference on Neural Information Processing Systems}, 2024.
\newblock URL \url{https://openreview.net/forum?id=Y13gSfTjGr}.

\bibitem[Hoffmann et~al.(2022)Hoffmann, Borgeaud, Mensch, Buchatskaya, Cai, Rutherford, de~las Casas, Hendricks, Welbl, Clark, Hennigan, Noland, Millican, van~den Driessche, Damoc, Guy, Osindero, Simonyan, Elsen, Vinyals, Rae, and Sifre]{hoffmann2022training}
Jordan Hoffmann, Sebastian Borgeaud, Arthur Mensch, Elena Buchatskaya, Trevor Cai, Eliza Rutherford, Diego de~las Casas, Lisa~Anne Hendricks, Johannes Welbl, Aidan Clark, Tom Hennigan, Eric Noland, Katherine Millican, George van~den Driessche, Bogdan Damoc, Aurelia Guy, Simon Osindero, Karen Simonyan, Erich Elsen, Oriol Vinyals, Jack~William Rae, and Laurent Sifre.
\newblock Training compute-optimal large language models.
\newblock In Alice~H. Oh, Alekh Agarwal, Danielle Belgrave, and Kyunghyun Cho, editors, \emph{Advances in Neural Information Processing Systems}, 2022.
\newblock URL \url{https://openreview.net/forum?id=iBBcRUlOAPR}.

\bibitem[Hu et~al.(2024)Hu, Tu, Han, He, Cui, Long, Zheng, Fang, Huang, Zhao, et~al.]{hu2024minicpm}
Shengding Hu, Yuge Tu, Xu~Han, Chaoqun He, Ganqu Cui, Xiang Long, Zhi Zheng, Yewei Fang, Yuxiang Huang, Weilin Zhao, et~al.
\newblock {MiniCPM:} unveiling the potential of small language models with scalable training strategies.
\newblock \emph{arXiv preprint arXiv:2404.06395}, 2024.

\bibitem[Izmailov et~al.(2019)Izmailov, Podoprikhin, Garipov, Vetrov, and Wilson]{izmailov2019averagingweightsleadswider}
Pavel Izmailov, Dmitrii Podoprikhin, Timur Garipov, Dmitry Vetrov, and Andrew~Gordon Wilson.
\newblock Averaging weights leads to wider optima and better generalization, 2019.
\newblock URL \url{https://arxiv.org/abs/1803.05407}.

\bibitem[Kasimbeg et~al.(2025)Kasimbeg, Schneider, Eschenhagen, Bae, Sastry, Saroufim, FENG, Wright, Yang, Nado, Medapati, Hennig, Rabbat, and Dahl]{kasimbeg2025accelerating}
Priya Kasimbeg, Frank Schneider, Runa Eschenhagen, Juhan Bae, Chandramouli~Shama Sastry, Mark Saroufim, BOYUAN FENG, Less Wright, Edward~Z. Yang, Zachary Nado, Sourabh Medapati, Philipp Hennig, Michael Rabbat, and George~E. Dahl.
\newblock Accelerating neural network training: An analysis of the algoperf competition.
\newblock In \emph{The Thirteenth International Conference on Learning Representations}, 2025.
\newblock URL \url{https://openreview.net/forum?id=CtM5xjRSfm}.

\bibitem[Li et~al.(2025)Li, Ma, Yan, Zhang, Liu, Lu, Xu, Chen, Wang, Zhan, et~al.]{li2025model}
Yunshui Li, Yiyuan Ma, Shen Yan, Chaoyi Zhang, Jing Liu, Jianqiao Lu, Ziwen Xu, Mengzhao Chen, Minrui Wang, Shiyi Zhan, et~al.
\newblock Model merging in pre-training of large language models.
\newblock \emph{arXiv preprint arXiv:2505.12082}, 2025.

\bibitem[Liu et~al.(2024)Liu, Feng, Xue, Wang, Wu, Lu, Zhao, Deng, Zhang, Ruan, et~al.]{liu2024deepseek}
Aixin Liu, Bei Feng, Bing Xue, Bingxuan Wang, Bochao Wu, Chengda Lu, Chenggang Zhao, Chengqi Deng, Chenyu Zhang, Chong Ruan, et~al.
\newblock {DeepSeek-V3 Technical Report}.
\newblock \emph{arXiv preprint arXiv:2412.19437}, 2024.

\bibitem[Loshchilov and Hutter(2017)]{loshchilov2017sgdr}
Ilya Loshchilov and Frank Hutter.
\newblock {SGDR}: Stochastic gradient descent with warm restarts.
\newblock In \emph{International Conference on Learning Representations}, 2017.
\newblock URL \url{https://openreview.net/forum?id=Skq89Scxx}.

\bibitem[Luo et~al.(2025)Luo, Wen, Hu, Sun, Sun, Liu, Lyu, and Chen]{luo2025a}
Kairong Luo, Haodong Wen, Shengding Hu, Zhenbo Sun, Maosong Sun, Zhiyuan Liu, Kaifeng Lyu, and Wenguang Chen.
\newblock A multi-power law for loss curve prediction across learning rate schedules.
\newblock In \emph{The Thirteenth International Conference on Learning Representations}, 2025.
\newblock URL \url{https://openreview.net/forum?id=KnoS9XxIlK}.

\bibitem[Morwani et~al.(2025)Morwani, Vyas, Zhang, and Kakade]{morwani2025connections}
Depen Morwani, Nikhil Vyas, Hanlin Zhang, and Sham Kakade.
\newblock Connections between schedule-free optimizers, ademamix, and accelerated sgd variants.
\newblock \emph{arXiv preprint arXiv:2502.02431}, 2025.

\bibitem[Polyak and Juditsky(1992)]{polyak1992acceleration}
Boris~T Polyak and Anatoli~B Juditsky.
\newblock Acceleration of stochastic approximation by averaging.
\newblock \emph{SIAM journal on control and optimization}, 30\penalty0 (4):\penalty0 838--855, 1992.

\bibitem[Radford et~al.(2019)Radford, Wu, Child, Luan, Amodei, and Sutskever]{radford2019language}
Alec Radford, Jeff Wu, Rewon Child, David Luan, Dario Amodei, and Ilya Sutskever.
\newblock Language models are unsupervised multitask learners.
\newblock 2019.

\bibitem[Ruppert(1988)]{ruppert1988efficient}
David Ruppert.
\newblock Efficient estimations from a slowly convergent robbins-monro process.
\newblock Technical report, Cornell University Operations Research and Industrial Engineering, 1988.

\bibitem[Sandler et~al.(2023)Sandler, Zhmoginov, Vladymyrov, and Miller]{sandler2023trainingtrajectoriesminibatchlosses}
Mark Sandler, Andrey Zhmoginov, Max Vladymyrov, and Nolan Miller.
\newblock Training trajectories, mini-batch losses and the curious role of the learning rate, 2023.
\newblock URL \url{https://arxiv.org/abs/2301.02312}.

\bibitem[Shazeer(2020)]{shazeer2020glu}
Noam Shazeer.
\newblock Glu variants improve transformer.
\newblock \emph{arXiv preprint arXiv:2002.05202}, 2020.

\bibitem[Soboleva et~al.(2023)Soboleva, Al-Khateeb, Myers, Steeves, Hestness, and Dey]{cerebras2023slimpajama}
Daria Soboleva, Faisal Al-Khateeb, Robert Myers, Jacob~R Steeves, Joel Hestness, and Nolan Dey.
\newblock {SlimPajama: A 627B token cleaned and deduplicated version of RedPajama}.
\newblock \url{https://cerebras.ai/blog/slimpajama-a-627b-token-cleaned-and-deduplicated-version-of-redpajama}, June 2023.
\newblock URL \url{https://huggingface.co/datasets/cerebras/SlimPajama-627B}.

\bibitem[Song and Yun(2023)]{song2023trajectory}
Minhak Song and Chulhee Yun.
\newblock Trajectory alignment: Understanding the edge of stability phenomenon via bifurcation theory.
\newblock In A.~Oh, T.~Naumann, A.~Globerson, K.~Saenko, M.~Hardt, and S.~Levine, editors, \emph{Advances in Neural Information Processing Systems}, volume~36, pages 71632--71682. Curran Associates, Inc., 2023.
\newblock URL \url{https://proceedings.neurips.cc/paper_files/paper/2023/file/e2a9256bd816ab9e082dfaa22f1f62a2-Paper-Conference.pdf}.

\bibitem[Song et~al.(2025)Song, Ahn, and Yun]{song2025does}
Minhak Song, Kwangjun Ahn, and Chulhee Yun.
\newblock Does {SGD} really happen in tiny subspaces?
\newblock In \emph{The Thirteenth International Conference on Learning Representations}, 2025.
\newblock URL \url{https://openreview.net/forum?id=v6iLQBoIJw}.

\bibitem[Su et~al.(2024)Su, Ahmed, Lu, Pan, Bo, and Liu]{su2024roformer}
Jianlin Su, Murtadha Ahmed, Yu~Lu, Shengfeng Pan, Wen Bo, and Yunfeng Liu.
\newblock Roformer: Enhanced transformer with rotary position embedding.
\newblock \emph{Neurocomputing}, 568:\penalty0 127063, 2024.
\newblock ISSN 0925-2312.
\newblock \doi{https://doi.org/10.1016/j.neucom.2023.127063}.
\newblock URL \url{https://www.sciencedirect.com/science/article/pii/S0925231223011864}.

\bibitem[Touvron et~al.(2023{\natexlab{a}})Touvron, Lavril, Izacard, Martinet, Lachaux, Lacroix, Rozi{\`e}re, Goyal, Hambro, Azhar, et~al.]{touvron2023llama}
Hugo Touvron, Thibaut Lavril, Gautier Izacard, Xavier Martinet, Marie-Anne Lachaux, Timoth{\'e}e Lacroix, Baptiste Rozi{\`e}re, Naman Goyal, Eric Hambro, Faisal Azhar, et~al.
\newblock Llama: Open and efficient foundation language models.
\newblock \emph{arXiv preprint arXiv:2302.13971}, 2023{\natexlab{a}}.

\bibitem[Touvron et~al.(2023{\natexlab{b}})Touvron, Martin, Stone, Albert, Almahairi, Babaei, Bashlykov, Batra, Bhargava, Bhosale, et~al.]{touvron2023llama2}
Hugo Touvron, Louis Martin, Kevin Stone, Peter Albert, Amjad Almahairi, Yasmine Babaei, Nikolay Bashlykov, Soumya Batra, Prajjwal Bhargava, Shruti Bhosale, et~al.
\newblock Llama 2: Open foundation and fine-tuned chat models.
\newblock \emph{arXiv preprint arXiv:2307.09288}, 2023{\natexlab{b}}.

\bibitem[Wang et~al.(2022)Wang, Li, and Li]{wang2022analyzing}
Zixuan Wang, Zhouzi Li, and Jian Li.
\newblock Analyzing sharpness along {GD} trajectory: Progressive sharpening and edge of stability.
\newblock In Alice~H. Oh, Alekh Agarwal, Danielle Belgrave, and Kyunghyun Cho, editors, \emph{Advances in Neural Information Processing Systems}, 2022.
\newblock URL \url{https://openreview.net/forum?id=thgItcQrJ4y}.

\bibitem[Wen et~al.(2025)Wen, Li, Wang, Hall, Liang, and Ma]{wen2024understanding}
Kaiyue Wen, Zhiyuan Li, Jason~S. Wang, David Leo~Wright Hall, Percy Liang, and Tengyu Ma.
\newblock Understanding warmup-stable-decay learning rates: A river valley loss landscape view.
\newblock In \emph{The Thirteenth International Conference on Learning Representations}, 2025.
\newblock URL \url{https://openreview.net/forum?id=m51BgoqvbP}.

\bibitem[Zhai et~al.(2022)Zhai, Kolesnikov, Houlsby, and Beyer]{zhai2022scaling}
Xiaohua Zhai, Alexander Kolesnikov, Neil Houlsby, and Lucas Beyer.
\newblock Scaling vision transformers.
\newblock In \emph{Proceedings of the IEEE/CVF conference on computer vision and pattern recognition}, pages 12104--12113, 2022.

\bibitem[Zhang and Sennrich(2019)]{zhang2019root}
Biao Zhang and Rico Sennrich.
\newblock Root mean square layer normalization.
\newblock In H.~Wallach, H.~Larochelle, A.~Beygelzimer, F.~d\textquotesingle Alch\'{e}-Buc, E.~Fox, and R.~Garnett, editors, \emph{Advances in Neural Information Processing Systems}, volume~32. Curran Associates, Inc., 2019.
\newblock URL \url{https://proceedings.neurips.cc/paper_files/paper/2019/file/1e8a19426224ca89e83cef47f1e7f53b-Paper.pdf}.

\bibitem[Zhang et~al.(2025)Zhang, Morwani, Vyas, Wu, Zou, Ghai, Foster, and Kakade]{zhang2025how}
Hanlin Zhang, Depen Morwani, Nikhil Vyas, Jingfeng Wu, Difan Zou, Udaya Ghai, Dean Foster, and Sham~M. Kakade.
\newblock How does critical batch size scale in pre-training?
\newblock In \emph{The Thirteenth International Conference on Learning Representations}, 2025.
\newblock URL \url{https://openreview.net/forum?id=JCiF03qnmi}.

\bibitem[Zhu et~al.(2023)Zhu, Wang, Wang, Zhou, and Ge]{zhu2023understanding}
Xingyu Zhu, Zixuan Wang, Xiang Wang, Mo~Zhou, and Rong Ge.
\newblock Understanding edge-of-stability training dynamics with a minimalist example.
\newblock In \emph{The Eleventh International Conference on Learning Representations}, 2023.
\newblock URL \url{https://openreview.net/forum?id=p7EagBsMAEO}.

\end{thebibliography}
\bibliographystyle{plainnat}


\newpage
\appendix

\renewcommand{\appendixpagename}{\centering \LARGE Appendix}
\appendixpage

\startcontents[section]
\printcontents[section]{l}{1}{\setcounter{tocdepth}{2}}

\section*{Impact Statement}\label{sec:impact}
This work studies scalable strategies for language model (LM) pretraining, with a focus on foundational algorithmic design. While the improper application of these algorithms in downstream tasks may lead to LMs producing harmful, offensive, or privacy-violating content, such applications fall outside the scope of this paper. Our contribution is limited to understanding and improving the pretraining algorithm itself.

\section{Pseudocode for Schedule-Free AdamW}

For completeness, we present the pseudocode for the Schedule-Free AdamW (\texttt{SF-AdamW}) algorithm in \cref{alg:sf_adamw}, along with our proposed refinement in \cref{alg:sf_adamw_refined}. The refinement introduces a decoupling parameter $C$ to independently control the averaging window, addressing the coupling issue discussed in \cref{sec:refined}.

\begin{algorithm}[htb]
\begin{algorithmic}[1]
    \STATE {\bfseries Input:} $x_1, \text{learning rate } \gamma, \text{decay } \lambda, \text{warmup steps } T_{\text{warmup}}, \beta_1, \beta_2, \epsilon$
    \STATE $z_1 = x_1$
    \STATE $v_0 = 0$
    \FOR{$t=1$ {\bfseries to} $T$}

    \STATE $y_t = (1-\beta_1)z_{t} + \beta_1 x_{t}$
    \STATE $g_{t} \in \partial f(y_t, \zeta_t)$
    \STATE $v_t = \beta_2 v_{t-1} + (1-\beta_2) g_t^2$
    \STATE $\hat{v}_t = v_t /(1-\beta_2^t)$
    \STATE $\gamma_t = \gamma \min(1, t/T_{\text{warmup}})$
    \STATE $z_{t+1} = z_{t}-\gamma_t g_t/(\sqrt{\hat{v}_t}+\epsilon) - \gamma_t \lambda y_t$
    \STATE $c_{t+1} = \frac{\gamma^2_t}{\sum^t_{i=1} \gamma^2_i}$
    \STATE $x_{t+1} = \left(1-c_{t+1} \right) x_t + c_{t+1}z_{t+1}$
    \ENDFOR
    \STATE Return $x_{T+1}$
\end{algorithmic}
\caption{\texttt{SF-AdamW}~\citep{defazio2024road}}\label{alg:sf_adamw}
\end{algorithm}

\begin{algorithm}[htb]
\begin{algorithmic}[1]
    \STATE {\bfseries Input:} $x_1, \text{learning rate } \gamma, \text{decay } \lambda, \text{warmup steps } T_{\text{warmup}}, \beta_1, \beta_2, \epsilon$, \textcolor{magenta}{decoupling parameter $C$}
    \STATE $z_1 = x_1$
    \STATE $v_0 = 0$
    \FOR{$t=1$ {\bfseries to} $T$}

    \STATE $y_t = (1-\beta_1)z_{t} + \beta_1 x_{t}$
    \STATE $g_{t} \in \partial f(y_t, \zeta_t)$
    \STATE $v_t = \beta_2 v_{t-1} + (1-\beta_2) g_t^2$
    \STATE $\hat{v}_t = v_t /(1-\beta_2^t)$
    \STATE $\gamma_t = \gamma \min(1, t/T_{\text{warmup}})$
    \STATE $z_{t+1} = z_{t}-\gamma_t g_t/(\sqrt{\hat{v}_t}+\epsilon) - \gamma_t \lambda y_t$
    \STATE $c_{t+1} =  \textcolor{magenta}{\min} \left\{\frac{\gamma^2_t}{\sum^t_{i=1} \gamma^2_i}  \textcolor{magenta}{\cdot(1-\beta_1)C,\; 1} \right\}$
    \STATE $x_{t+1} = \left(1-c_{t+1} \right) x_t + c_{t+1}z_{t+1}$
    \ENDFOR
    \STATE Return $x_{T+1}$
\end{algorithmic}
\caption{Refined \texttt{SF-AdamW} (with decoupling parameter $C$)}\label{alg:sf_adamw_refined}
\end{algorithm}

\section{Experimental Details}\label{appendix:experiment}
\subsection{Language Model Experiments}

\paragraph{Codebase.}
All language model experiments are implemented using the public \texttt{llm-baselines} codebase: \url{https://github.com/epfml/llm-baselines}. Our only modification is the addition of custom optimizer implementations to support the Schedule-Free and refined Schedule-Free methods. All other components (model, data pipeline, logging) remain unchanged.

\paragraph{Architectures.}
Our main experiments use a 124M-parameter LLaMA-style decoder-only transformer with SwiGLU activations~\citep{shazeer2020glu}, RoPE embeddings~\citep{su2024roformer}, RMSNorm~\citep{zhang2019root}, and alternating attention/MLP blocks (12 layers, 12 attention heads, hidden dimension 768). Additional results in \cref{appendix:additional_results} verify our findings with a 124M-parameter GPT-2-style transformer. Both architectures are implemented in \texttt{llm-baselines} with standard design choices.

\paragraph{Datasets.}

Our main experiments use the 6B-token subset of the SlimPajama dataset~\citep{cerebras2023slimpajama}, available on Hugging Face.\footnote{\url{https://huggingface.co/datasets/DKYoon/SlimPajama-6B}} We tokenize with the GPT-2 tokenizer~\citep{radford2019language}, which has a vocabulary size of 50,304. In \cref{appendix:additional_results}, we also validate our main findings on the OpenWebText2 dataset~\citep{pile},\footnote{\url{https://huggingface.co/datasets/segyges/OpenWebText2}} using the same setup.

\paragraph{Training details.}
We train models using \texttt{AdamW} and \texttt{SF-AdamW}, with a short warmup phase comprising 5\% of total steps. For large-batch runs with cosine decay, the learning rate is annealed to 10\% of its peak. Main experiments use a batch size of 1,024 sequences of context length 512 tokens (0.5M tokens total), trained for 5,000 steps, amounting to roughly 2.5B tokens ($\sim 1\times$ Chinchilla scale). Large-batch experiments use a 2M-token batch size and 2,500 steps ($\sim$5B tokens, or $\sim 2\times$ Chinchilla scale) to evaluate efficiency in the overtrained regime. Validation is performed during training using 3,200 sequences of context length 512 tokens ($\sim$1.6M tokens) to compute validation loss (perplexity) curves. For computing EWA, we use a decay factor of 0.99 for all experiments.

\paragraph{Hyperparameters.}
We fix the weight decay to 0.1 in all experiments. Gradient clipping is set to 1.0 for \texttt{AdamW} and disabled (0.0) for \texttt{SF-AdamW}. We perform sweeps over the learning rate, momentum parameters, and the decoupling parameter $C$ (for refined \texttt{SF-AdamW}). Full configurations are provided in \cref{tab:hyperparameter_main,tab:hyperparameter_main_largeBS,tab:hyperparameter_owt,tab:hyperparameter_owt_largeBS}.

\paragraph{Compute Resources.}
All experiments are conducted on a single node with 8 NVIDIA A6000 GPUs (48GB VRAM each) using data-parallel training. A typical full 5,000-step run with a 0.5M-token batch size takes approximately 3 hours.

\begin{table}[!htp]
    \centering
    \small
    \begin{tabular}{@{}r|lll@{}}
        \toprule
        Optimizer & Learning Rate & $(\beta_1, \beta_2)$ & $C$ (Refined \texttt{SF}) \\
        \midrule
        \texttt{AdamW} & \{5e-4, \textbf{1e-3}, 2e-3, 5e-3\} & \{\textbf{(0.9, 0.95)}, (0.95, 0.99)\} & -- \\
        \texttt{SF-AdamW} & \{1e-3, \textbf{2e-3}, 5e-3\} & \{(0.9, 0.99), \textbf{(0.95, 0.99)}, (0.98, 0.99)\} & -- \\
        Refined \texttt{SF-AdamW} & 2e-3 & (0.95, 0.99) & \{20, 50, 100, \textbf{200}, 500\} \\
        \bottomrule
    \end{tabular}
    \vspace{5pt}
    \caption{\textbf{Hyperparameter sweep: Main experiments.} 
    Grid of hyperparameters used in our main experiments (SlimPajama-6B, 0.5M-token batch size), including learning rates, momentum pairs $(\beta_1, \beta_2)$, and the decoupling parameter $C$ (for Refined \texttt{SF-AdamW}). Bold entries indicate the best-performing configuration for each optimizer.}
    \label{tab:hyperparameter_main}
\end{table}

\begin{table}[!htp]
    \centering
    \small
    \begin{tabular}{@{}r|lll@{}}
        \toprule
        Optimizer & Learning Rate & $(\beta_1, \beta_2)$ & $C$ (Refined \texttt{SF}) \\
        \midrule
        \texttt{AdamW} (Cosine LR) & \{5e-4, \textbf{1e-3}, 2e-3, 5e-3\} & \{\textbf{(0.9, 0.95)}, (0.95, 0.99)\} & -- \\
        \texttt{SF-AdamW} & \{1e-3, \textbf{2e-3}, 5e-3\} & \{(0.9, 0.99), (0.95, 0.99), \textbf{(0.98, 0.99)}\} & -- \\
        Refined \texttt{SF-AdamW} & 2e-3 & (0.98, 0.99) & \{20, 50, 100, \textbf{200}, 500\} \\
        \bottomrule
    \end{tabular}
    \vspace{5pt}
    \caption{\textbf{Hyperparameter sweep: Large-batch experiments.} 
    Grid of hyperparameters used in our large-batch experiments (SlimPajama-6B, 2M-token batch size), including learning rates, momentum pairs $(\beta_1, \beta_2)$, and the decoupling parameter $C$ (for Refined \texttt{SF-AdamW}). Bold entries indicate the best-performing configuration.}
    \label{tab:hyperparameter_main_largeBS}
\end{table}

\begin{table}[!htp]
    \centering
    \small
    \begin{tabular}{@{}r|lll@{}}
        \toprule
        Optimizer & Learning Rate & $(\beta_1, \beta_2)$ & $C$ (Refined \texttt{SF}) \\
        \midrule
        \texttt{AdamW} & \{5e-4, \textbf{1e-3}, 2e-3\} & \{\textbf{(0.9, 0.95)}, (0.95, 0.99)\} & -- \\
        \texttt{SF-AdamW} & \{1e-3, \textbf{2e-3}, 5e-3\} & \{(0.9, 0.99), \textbf{(0.95, 0.99)}, (0.98, 0.99)\} & -- \\
        Refined \texttt{SF-AdamW} & 2e-3 & (0.95, 0.99) & \{50, 100, \textbf{200}, 500\} \\
        \bottomrule
    \end{tabular}
    \vspace{5pt}
    \caption{\textbf{Hyperparameter sweep: OpenWebText2 experiments.} 
    Grid of hyperparameters used in our additional experiments on OpenWebText2 (0.5M-token batch size), including learning rates, momentum pairs $(\beta_1, \beta_2)$, and the decoupling parameter $C$ (for Refined \texttt{SF-AdamW}). Bold entries indicate the best-performing configuration.}
    \label{tab:hyperparameter_owt}
\end{table}

\begin{table}[!htp]
    \centering
    \small
    \begin{tabular}{@{}r|lll@{}}
        \toprule
        Optimizer & Learning Rate & $(\beta_1, \beta_2)$ & $C$ (Refined \texttt{SF}) \\
        \midrule
        \texttt{AdamW} (Cosine LR) & \{5e-4, 1e-3, 2e-3, \textbf{5e-3}\} & \{\textbf{(0.9, 0.95)}, (0.95, 0.99)\} & -- \\
        \texttt{SF-AdamW} & \{5e-4, 1e-3, \textbf{2e-3}, 5e-3\} & \{(0.95, 0.99), \textbf{(0.98, 0.99)}\} & -- \\
        Refined \texttt{SF-AdamW} & \{2e-3, \textbf{5e-3}, 1e-2\} & (0.98, 0.99) & \{20, 50, 100, 200, \textbf{500}, 1000\} \\
        \bottomrule
    \end{tabular}
    \vspace{5pt}
    \caption{\textbf{Hyperparameter sweep: OpenWebText2 large-batch experiments.} 
    Grid of hyperparameters used in our additional large-batch experiments on OpenWebText2 (2M-token batch size), including learning rates, momentum pairs $(\beta_1, \beta_2)$, and the decoupling parameter $C$ (for Refined \texttt{SF-AdamW}). Bold entries indicate the best-performing configuration.}
    \label{tab:hyperparameter_owt_largeBS}
\end{table}

\subsection{Edge of Stability Experiments on CIFAR-10}

We provide the experimental setup for \cref{fig:eos} in \cref{sec:eos}, where we study whether Schedule-Free methods operate at the Edge of Stability in a deep learning setting.

Our experiments build on the public \texttt{edge-of-stability} codebase,\footnote{\url{https://github.com/locuslab/edge-of-stability}} modifying only the optimizer to incorporate Schedule-Free methods. The model is a 3-layer MLP with hidden width 200 and $\tanh$ activations, trained on the first 5,000 samples of CIFAR-10 using mean squared error (MSE) loss.

For \texttt{SF-GD}, we fix momentum $\beta = 0.9$ and vary the learning rate, training each run until the loss reaches 0.02.
For \texttt{SF-AdamW}, we fix $\beta_1 = 0.95$, $\beta_2 = 0.99$, and weight decay at 0.1, and vary the learning rate, training until the loss reaches 0.05.

\section{Additional Results}\label{appendix:additional_results}

In this section, we present additional experiments on the OpenWebText2 dataset using a 124M-parameter GPT-2 style decoder-only transformer, replicating the setups from the main text. As summarized below, the results confirm that our main findings~(\cref{obs:sf_river,obs:sensitive-momentum,obs:y}) generalize across both datasets and architectures.

\begin{itemize}
    \item \textbf{\cref{obs:sf_river}: } 
    As in the main experiments, we perform a grid search to determine the best hyperparameters for both \texttt{AdamW} and \texttt{SF-AdamW}, and evaluate whether LR decay or EWQ improves performance. In this setup, the best-performing configuration for \texttt{AdamW} is $(\beta_1, \beta_2) = (0.9, 0.95)$ with LR 1e-3, while \texttt{SF-AdamW} achieves optimal results with $(\beta_1, \beta_2) = (0.95, 0.99)$ and LR 2e-3. We observe that neither LR decay nor EWA improves performance for \texttt{SF-AdamW}, indicating that it already tracks the river closely (see~\cref{fig:Obs1_Openwebtext2}).
    \item \textbf{\cref{obs:sensitive-momentum}: } 
    We also assess the behavior of \texttt{SF-AdamW} under suboptimal momentum configurations, using $\beta_1 \in \{0.1, 0.5\}$ while keeping all other settings fixed. For both values, we observe that applying a short LR decay phase using \texttt{AdamW} significantly improves performance, reducing the validation loss compared to the constant LR baseline. This result shows that \texttt{SF-AdamW} is sensitive to the choice of momentum, and suboptimal settings can hinder its ability to effectively follow the river (see~\cref{fig:Decay_AdamW_betas_openwebtext2}).
    \item \textbf{\cref{obs:y}: } 
    Using the same runs from above, we compute the validation loss at the $\by_t$ iterates and their EWA. For suboptimal momentum settings ($\beta_1 \in \{0.1, 0.5\}$), the loss at $\by_t$ is consistently lower than that at $\bx_t$, consistent with trends observed in both the toy model and SlimPajama experiments. Moreover, across all momentum values, applying EWA to $\by_t$ further improves performance, suggesting that $\by_t$ is more robust to suboptimal momentum and remains better aligned with the river trajectory (see~\cref{fig:EMA_y_openwebtext2}).
    \item \textbf{Refined SF-AdamW: } 
    We evaluate the refined variant of \texttt{SF-AdamW} by sweeping over $C$ across multiple momentum settings. In the low-momentum regime ($\beta_1 = 0.5$), the refined method enables $\bx_t$ to match or outperform $\by_t$, addressing the discrepancy observed in the vanilla formulation. In the best-performing vanilla configuration ($\beta_1 = 0.95$, LR 2e-3), setting $C = 200$ yields further performance gains. Notably, comparable results can also be achieved under a suboptimal momentum setting ($\beta_1 = 0.9$) by choosing $C = 50$, demonstrating improved robustness to hyperparameter choices. In the large-batch setting (2M-token batches), vanilla \texttt{SF-AdamW} with a constant LR ($(\beta_1, \beta_2)=(0.98, 0.99)$, LR 2e-3) underperforms relative to \texttt{AdamW} with a cosine LR schedule ($(\beta_1, \beta_2)=(0.9, 0.95)$, LR 5e-3). However, the refined variant with $C = 500$ ($(\beta_1, \beta_2)=(0.98, 0.99)$, LR 5e-3) successfully closes this performance gap. Together, this suggest that our refined method is robust to momentum and batch size scaling (see~\cref{fig:modified_openwebtext2}).
\end{itemize}

\begin{figure}[htp]
    \begin{subfigure}{0.33\linewidth}
        \centering
        \includegraphics[width=\linewidth]{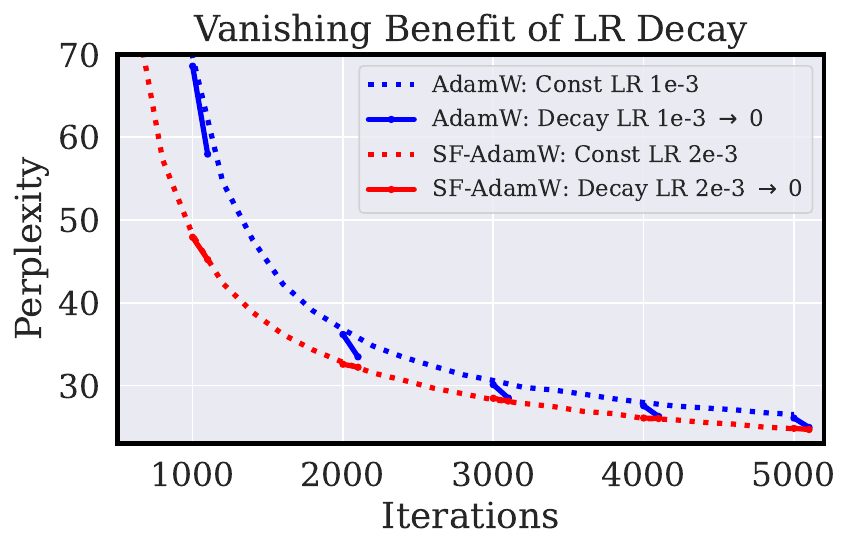}
    \end{subfigure}
    \begin{subfigure}{0.33\linewidth}
        \centering
        \includegraphics[width=\linewidth]{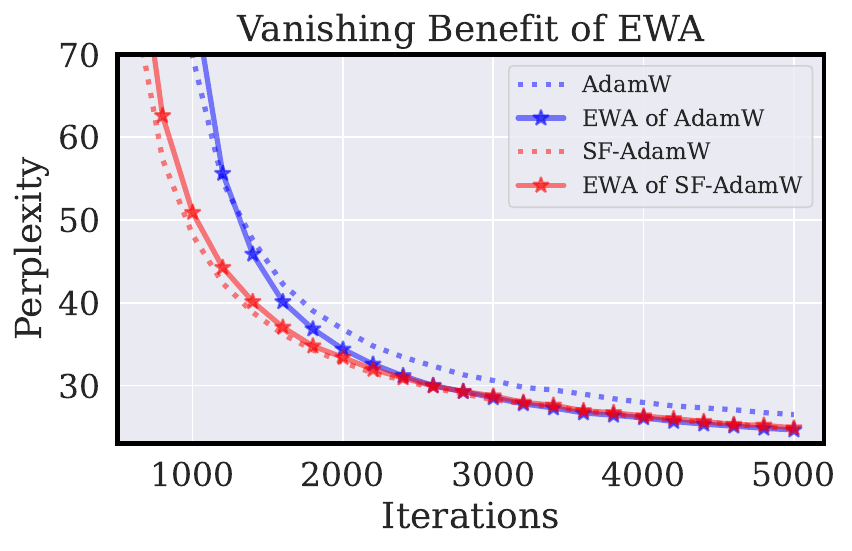}
    \end{subfigure}
    \begin{subfigure}{0.33\linewidth}
        \centering
        \includegraphics[width=\linewidth]{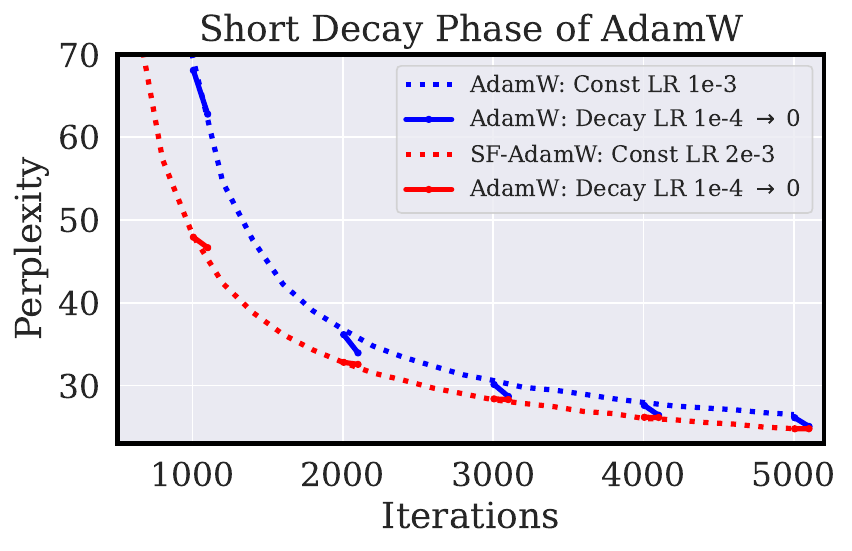}
    \end{subfigure}
    \caption{\textbf{OpenWebText2 Experiment: SF-AdamW closely follows the river, unlike AdamW.} \textbf{Left, Middle:} While \texttt{AdamW} benefits from linear LR decay and EWA, \texttt{SF-AdamW} shows no improvement from either. \textbf{Right:} A short decay phase of \texttt{AdamW} (with linear LR decay from 1e-4 to 0) leads to a sharp loss drop for \texttt{AdamW}, but has no effect when applied to the \texttt{SF-AdamW} trajectory---suggesting that \texttt{SF-AdamW} already tracks the river throughout training (\cref{obs:sf_river}).}
    \label{fig:Obs1_Openwebtext2}
\end{figure}

\begin{figure}[htp]
    \centering
    \includegraphics[width=0.5\linewidth]{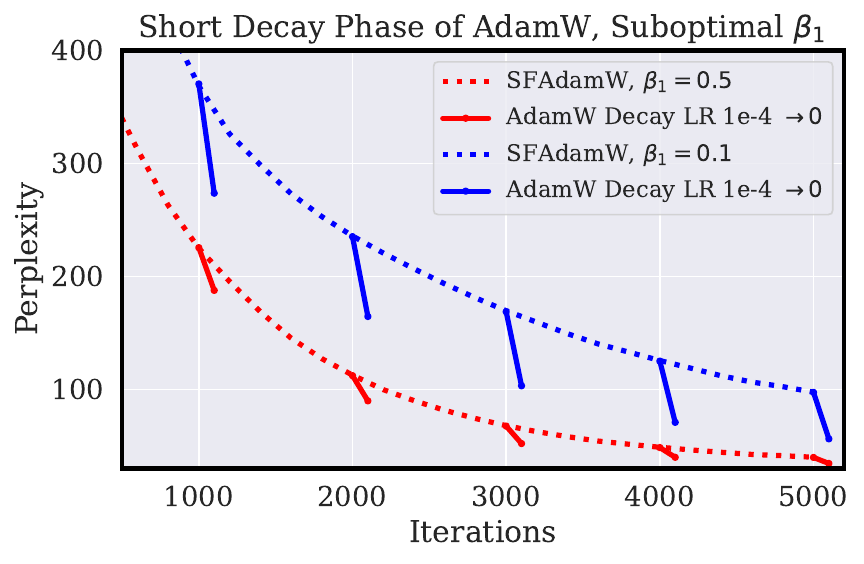}
    \caption{\textbf{OpenWebText2 Experiment: SF-AdamW with suboptimal momentum fails to follow the river.}  A short decay phase of \texttt{AdamW} applied to \texttt{SF-AdamW} checkpoints with $\beta_1 \in \{0.1, 0.5\}$ results in a sharp loss drop, unlike the case with $\beta_1 = 0.95$ (\cref{obs:sensitive-momentum}).}
    \label{fig:Decay_AdamW_betas_openwebtext2}
\end{figure}

\begin{figure}[htp]
    \centering
    \includegraphics[width=\linewidth]{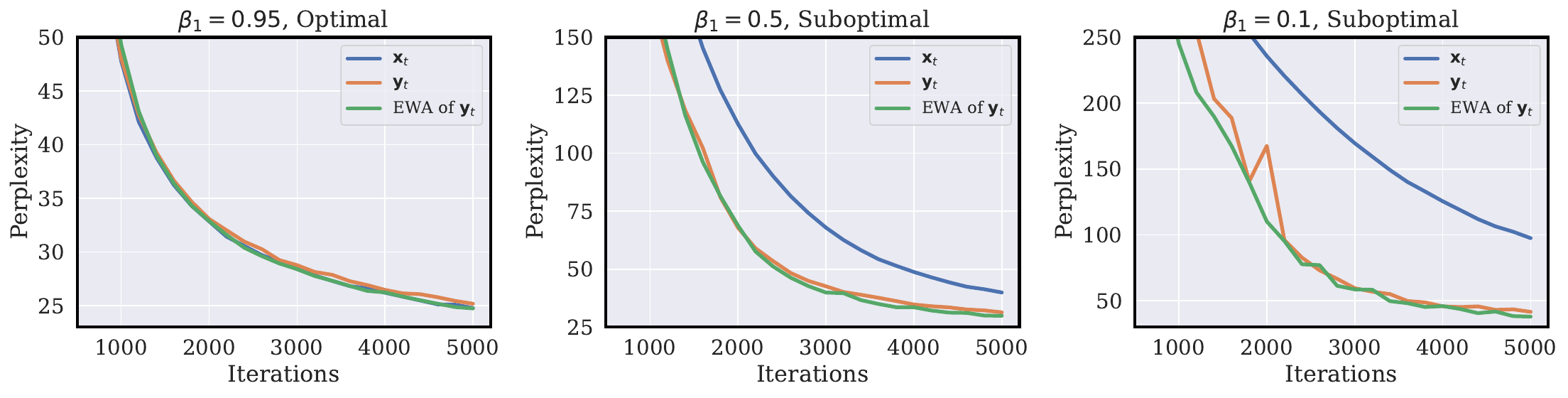}
    \caption{\textbf{OpenWebText2 Experiment: Performance of $\bx_t$, $\by_t$, and the EWA of $\by_t$ under varying $\beta_1$.} For suboptimal $\beta_1$, $\by_t$ outperforms $\bx_t$, and across all momentum settings, the EWA of $\by_t$ achieves the lowest loss (\cref{obs:y}).}
    \label{fig:EMA_y_openwebtext2}
\end{figure}

\begin{figure}[htp]
    \begin{subfigure}{0.32\linewidth}
        \centering
        \includegraphics[width=\linewidth]{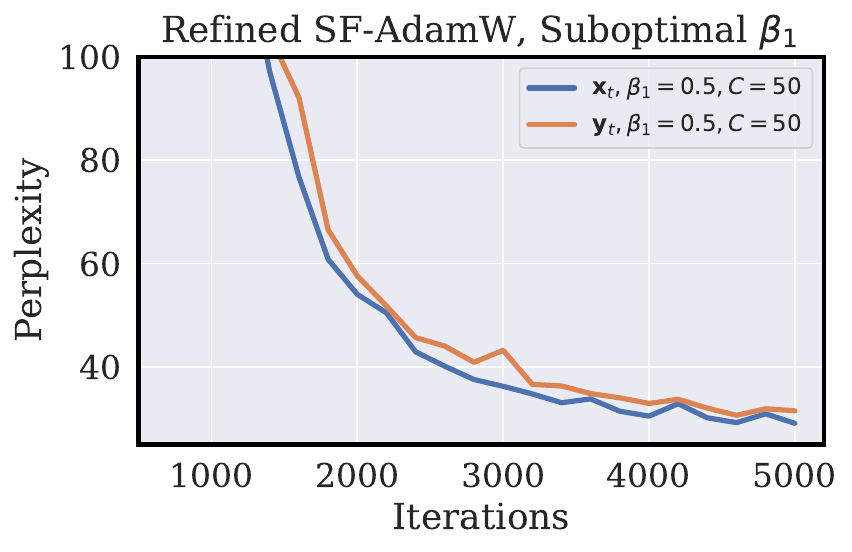}
    \end{subfigure}
    \hfill
    \begin{subfigure}{0.32\linewidth}
        \centering
        \includegraphics[width=\linewidth]{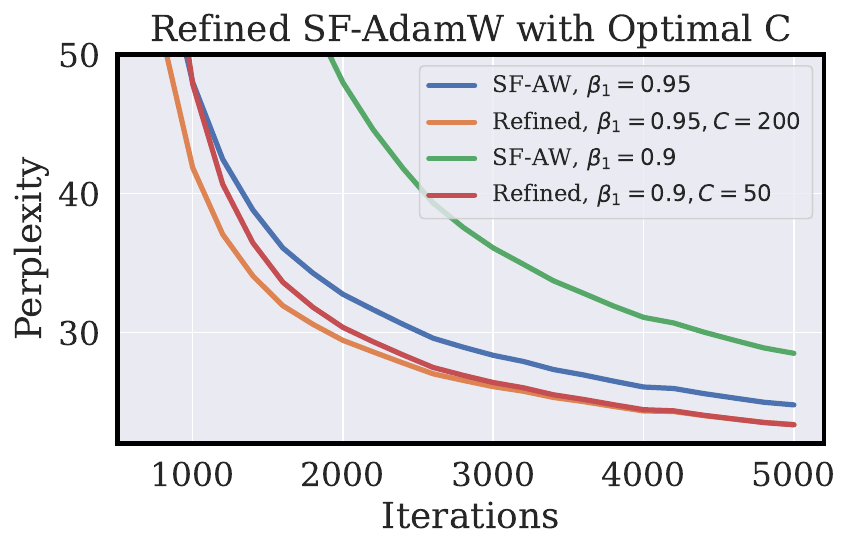}
    \end{subfigure}
    \hfill
    \begin{subfigure}{0.32\linewidth}
        \centering
        \includegraphics[width=\linewidth]{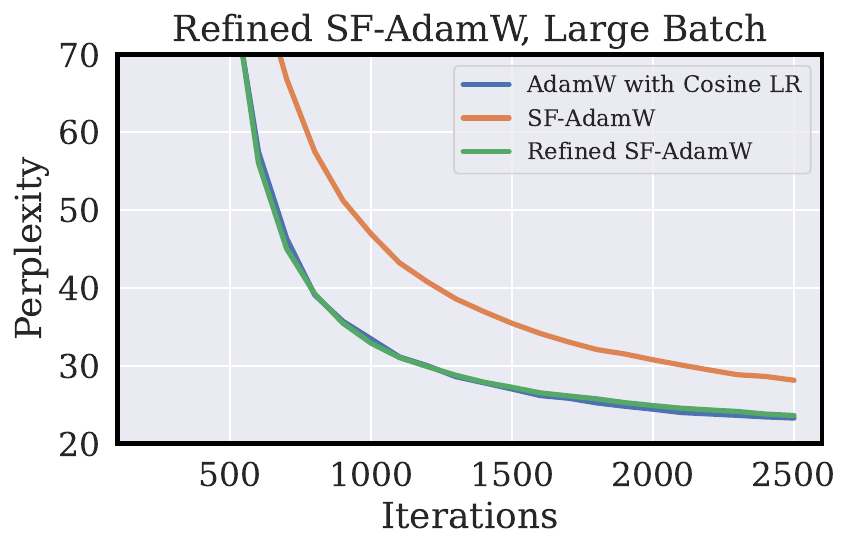}
    \end{subfigure}
    \caption{\textbf{OpenWebText2 Experiment: Refined SF-AdamW.} \textbf{Left:} Performance of $\bx_t$ and $\by_t$ iterates using the refined \texttt{SF-AdamW} with $\beta_1=0.5$ and $C = 50$. 
    \textbf{Middle:} Refined \texttt{SF-AdamW} with $(\beta_1,C) \in \{(0.95, 200), (0.9, 50)\}$ achieves improved performance over the best vanilla \texttt{SF-AdamW} run. \textbf{Right:} Under a large batch size (2M tokens), vanilla \texttt{SF-AdamW} with $\beta_1=0.98$ underperforms compared to \texttt{AdamW} with a cosine schedule, while the refined \texttt{SF-AdamW} with $C=500$ matches its final performance.}
    \label{fig:modified_openwebtext2}
\end{figure}

\clearpage
\subsection{Sensitivity to the Refinement Parameter \texorpdfstring{$C$}{C}}
\label{appendix:refined_sensitivity}

To examine the sensitivity of refined \texttt{SF-AdamW} to the choice of the refinement parameter $C$, we conduct experiments on SlimPajama (0.5M and 2M token batch sizes). We sweep a range of $C$ values and report test perplexity over 9M held-out tokens. The results are summarized in \autoref{tab:c_sweep_05M} and \autoref{tab:c_sweep_2M}.

Our results demonstrate that refined \texttt{SF-AdamW} is consistently robust to the choice of $C$. Across a broad range of settings, it outperforms vanilla \texttt{SF-AdamW} (i.e., $C=1/(1-\beta_1)$). For instance, in \autoref{tab:c_sweep_05M}, values such as $C \in \{50, 100, 200, 500\}$ improve performance for $\beta_1 = 0.9, 0.95$, and $C=50, 100$ improve performance across all momentum values. Similar trends hold in \autoref{tab:c_sweep_2M}, with performance gains persisting even at large $C$. These results indicate that refined \texttt{SF-AdamW} remains effective without requiring careful tuning of $C$.

\begin{table}[h]
    \centering
    \small
    \begin{tabular}{@{}c|cccccccc@{}}
        \toprule
        \textbf{$\beta_1$} & Vanilla & $C=5$ & 10 & 20 & 50 & 100 & 200 & 500 \\
        \midrule
        0.1 & 67.20 & 37.06 & 35.80 & 36.62 & 36.15 & 37.27 & -- & -- \\
        0.5 & 41.01 & -- & -- & 29.32 & 30.87 & 29.96 & 29.57 & -- \\
        0.9 & 27.70 & -- & 27.70 & -- & 23.97 & 24.64 & 24.93 & 25.11 \\
        0.95 & 25.12 & -- & -- & 25.12 & 23.98 & 23.60 & 24.37 & 24.83 \\
        \bottomrule
    \end{tabular}
    \vspace{5pt}
    \caption{\textbf{Refined \texttt{SF-AdamW} on SlimPajama with 0.5M-token batch size.} Test perplexity (on 9M held-out tokens) under varying $C$ values. LR = 2e-3, $\beta_2 = 0.99$.}
    \label{tab:c_sweep_05M}
\end{table}

\begin{table}[h]
    \centering
    \small
    \begin{tabular}{@{}c|ccccccccc@{}}
        \toprule
        \textbf{$\beta_1$} & Vanilla & $C=5$ & 10 & 20 & 50 & 100 & 200 & 500 & 1000 \\
        \midrule
        0.1 & 110.8 & 54.80 & 47.37 & 44.31 & 43.80 & -- & -- & -- & -- \\
        0.5 & 54.24 & 47.80 & 38.42 & 38.86 & 38.49 & 42.97 & -- & -- & -- \\
        0.9 & 31.34 & -- & 31.34 & 29.86 & 27.68 & 28.16 & 27.02 & 27.88 & -- \\
        0.95 & 27.29 & -- & -- & 27.29 & 25.77 & 25.45 & 25.77 & 27.31 & -- \\
        0.98 & 26.09 & -- & -- & 30.23 & 26.09 & 25.49 & 24.51 & 23.95 & 23.88 \\
        \bottomrule
    \end{tabular}
    \vspace{5pt}
    \caption{\textbf{Refined \texttt{SF-AdamW} on SlimPajama with 2M-token batch size.} Test perplexity (on 9M held-out tokens) under varying $C$ values. LR = 2e-3, $\beta_2 = 0.99$.}
    \label{tab:c_sweep_2M}
\end{table}

\clearpage
\section{Omitted Derivations and Proofs}\label{appendix:proof}

\subsection{Proof of Stability Threshold of Schedule-Free Optimizer}\label{appendix:eos}

\subsubsection{Proof of \cref{prop:stability_sfgd}}
\stabilityGD*

\begin{proof}
    To begin with, we revisit the update rule of \texttt{SF-GD}, given by
\begin{align} 
    \begin{split}
    \mathbf{x}_{t} &= (1-c_{t})\, \mathbf{x}_{t-1} + c_{t}\, \mathbf{z}_{t},\\[1mm]
    \mathbf{y}_t &= (1-\beta)\, \mathbf{z}_t + \beta\, \mathbf{x}_t,\\[1mm]
    \mathbf{z}_{t+1} &= \mathbf{z}_t - \gamma \nabla f(\mathbf{y}_t).
    \end{split}
    \tag{\texttt{SF-GD}}\label{eq:schefree_gd}
\end{align}
On a quadratic objective, we get $\nabla f(\mathbf{w})=\bH \bw + \bg$. By substituting this and combining the last two relations, we get
\begin{align*}
    \bz_{t+1} &= \bz_t - \gamma(\bH \by_t + \bg) \\
    &= \bz_t - \gamma((1-\beta)\bH\bz_t + \beta \bH \bx_t + \bg) \\
    &= (\bI - \gamma (1-\beta)\bH)\bz_t - \beta\gamma\bH\bx_t-\gamma\bg.
\end{align*}

By substituting this to the first relation, we get a recurrence relation governing $\mathbf{x}_t$ as follows:
\begin{align}
    \mathbf{x}_{t+1} =& (1-c_{t+1})\bx_t + c_{t+1}\bz_{t+1} \notag\\
    =&(1-c_{t+1})\bx_t + c_{t+1}((\bI - \gamma (1-\beta)\bH)\bz_t - \beta\gamma\bH\bx_t-\gamma\bg) \notag\\
    =&((1-c_{t+1})\bI-\beta\gamma c_{t+1} \bH)\bx_t + c_{t+1}(\bI - \gamma(1-\beta)\bH)\bz_t-\gamma c_{t+1}\bg \notag\\
    =& ((1-c_{t+1})\bI-\beta\gamma c_{t+1} \bH)\bx_t + c_{t+1}(\bI - \gamma(1-\beta)\bH)\left(\frac{1}{c_t}\bx_t - \left(\frac{1}{c_t}-1\right)\bx_{t-1}\right)-\gamma c_{t+1}\bg \notag\\
    =& \left(\left(1+\frac{c_{t+1}}{c_t}-c_{t+1}\right)\bI-\gamma c_{t+1}\left(\frac{1-\beta}{c_t}+\beta\right)\bH\right)\mathbf{x}_t \notag \\
    & +\left(\left(c_{t+1} - \frac{c_{t+1}}{c_t}\right)\bI-\gamma\left(c_{t+1} - \frac{c_{t+1}}{c_t}\right)(1-\beta)\bH\right)\mathbf{x}_{t-1}-\gamma c_{t+1}\mathbf{g}.
    \label{eq:recur_sfgd_quad}
\end{align}
Define $(\mathbf{q}, a) \coloneqq (\mathbf{q}, \lambda_1(\bH))$ to be the largest eigenvector/eigenvalue pair of $\bH$ and $\Tilde{x}_t = \mathbf{q}^\top\mathbf{x}_t + \frac{1}{a}\mathbf{q}^\top \mathbf{g}$. Then the sequence $\{\mathbf q^\top \bx_t\}$ diverges if and only if the sequence $\{\Tilde{x}_t\}$ diverges. By multiplying $\mathbf{q}^\top$ on both sides of \cref{eq:recur_sfgd_quad}, we get
\begin{align*}
    \mathbf{q}^\top\mathbf{x}_{t+1} =& \left(1+\frac{c_{t+1}}{c_t}-c_{t+1}-\gamma c_{t+1}\left(\frac{1-\beta}{c_t}+\beta\right)a\right)\mathbf{q}^\top\mathbf{x}_t \\
    & + \left(c_{t+1} - \frac{c_{t+1}}{c_t}\right)(1-\gamma(1-\beta)a)\mathbf{q}^\top\mathbf{x}_{t-1}-\gamma c_{t+1}\mathbf{q}^\top\mathbf{g},
\end{align*}
from $\mathbf{q}^\top \bH = a\mathbf{q}^\top$. From the definition of $\Tilde{x}_t$, we get
\begin{align*}
    \Tilde{x}_{t+1} = \left(1+\frac{c_{t+1}}{c_t}-c_{t+1}-\gamma c_{t+1}\left(\frac{1-\beta}{c_t}+\beta\right)a\right)\Tilde{x}_t + \left(c_{t+1} - \frac{c_{t+1}}{c_t}\right)(1-\gamma(1-\beta)a)\Tilde{x}_{t-1},
    \label{eq:diff_tilde_x}
\end{align*}
which is a linear time-varying second order difference equation governing $\Tilde{x}_t$.

Its asymptotic behavior is governed by the limiting recurrence relation:
\begin{align*}
    \bar{x}_{t+1} = (2-a(1-\beta)\gamma)\bar{x}_t + (a(1-\beta)\gamma-1)\bar{x}_{t-1}.
\end{align*}
Two roots of this recurrence relation are given by
\begin{align*}
     \lambda_1 = \frac{2-a(1-\beta)\gamma + \sqrt{(2-a(1-\beta)\gamma )^2 -4(a(1-\beta)\gamma -1)}}{2}\\
     \lambda_2 = \frac{2-a(1-\beta)\gamma - \sqrt{(2-a(1-\beta)\gamma )^2 -4(a(1-\beta)\gamma -1)}}{2}.
\end{align*}
If $2<a(1-\beta)\gamma <4+2\sqrt{2}$, then 
\begin{align*}
    |\lambda_1|^2 = |\lambda_2|^2 &= \left(\frac{2-a(1-\beta)\gamma}{2}\right)^2 - \frac{(2-a(1-\beta)\gamma )^2 -4(a(1-\beta)\gamma -1)}{4} \\
    &= a(1-\beta)\gamma-1>1,
\end{align*}
which implies $\bar{x}_t$ diverges.

If $a(1-\beta)\gamma \geq 4+2\sqrt{2}$, then $\lambda_2$ can be regarded as a real valued function with respect to $a(1-\beta)\gamma$. Since $\lambda_2$ is decreasing and $\lambda_2<-1$ when $a(1-\beta)\gamma=4+2\sqrt{2}$, we get $\lambda_2<-1$ when $a(1-\beta)\gamma \geq 4+2\sqrt{2}$, which implies that $\bar{x}_t$ also diverges.

Since we take $\gamma>0$ and $0\leq\beta<1$, the condition $a > \frac{2}{(1-\beta)\gamma}$ implies diverging $\bar{x}_t$ as well as $\Tilde{x}_t$.
\end{proof}

\subsubsection{Proof of \cref{prop:stability_sfpgd}}
To show \cref{prop:stability_sfpgd}, we first prove the following reparameterization lemma.

\begin{lemma}
    Define $\textnormal{\texttt{SF-PrecondGD}}$ by \eqref{eq:schefree} with $\Delta_t \triangleq \mathbf P^{-1} \nabla f(\by_t)$. Let $\{\bx_t\}$ denotes the iterates of \textnormal{\texttt{SF-PrecondGD}} on the objective $f(\bw)$, and let $\{\tilde{\bx}_t\}$ denote the iterates of \textnormal{\texttt{SF-GD}} on the reparameterized objective $\tilde{f}(\bw)=f(\mathbf P^{-1/2}\bw)$ with initialization $\tilde{\bx}_1=\mathbf P^{1/2}\bx_1$. Then, we have $\tilde{\bx}_t = \mathbf P^{1/2} \bx_t$ for all steps $t$.
    \label{lemma:reparam}
\end{lemma}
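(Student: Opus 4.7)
The plan is a straightforward induction on $t$, built around the observation that the change of variables $\tilde{\bw} = \mathbf P^{1/2}\bw$ transforms the preconditioned gradient of $f$ into the unpreconditioned gradient of $\tilde{f}$. The key chain-rule identity I would record first is
\[
\nabla \tilde{f}(\tilde{\bw}) \;=\; \mathbf P^{-1/2}\,\nabla f(\mathbf P^{-1/2}\tilde{\bw}),
\]
so that whenever $\tilde{\by}_t = \mathbf P^{1/2}\by_t$ we have $\nabla\tilde{f}(\tilde{\by}_t) = \mathbf P^{-1/2}\nabla f(\by_t)$. Multiplying the \texttt{SF-PrecondGD} update $\bz_{t+1} = \bz_t - \gamma\mathbf P^{-1}\nabla f(\by_t)$ by $\mathbf P^{1/2}$ on the left then reproduces exactly the \texttt{SF-GD} $\bz$-update on $\tilde{f}$.

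Next, I would run an induction maintaining the joint invariant
\[
\tilde{\bx}_t = \mathbf P^{1/2}\bx_t, \qquad \tilde{\by}_t = \mathbf P^{1/2}\by_t, \qquad \tilde{\bz}_t = \mathbf P^{1/2}\bz_t.
\]
The base case $t=1$ holds by the assumed initialization $\tilde{\bx}_1 = \mathbf P^{1/2}\bx_1$ together with the fact that both algorithms initialize $\bz_1 = \bx_1$ and $\tilde{\bz}_1 = \tilde{\bx}_1$. For the inductive step, I would first propagate the $\bx$-update $\bx_t = (1-c_t)\bx_{t-1} + c_t\bz_t$: since the update is an affine combination with no preconditioner, it commutes with the linear map $\bw \mapsto \mathbf P^{1/2}\bw$, giving $\tilde{\bx}_t = \mathbf P^{1/2}\bx_t$. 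The same observation applied to $\by_t = (1-\beta)\bz_t + \beta\bx_t$ yields $\tilde{\by}_t = \mathbf P^{1/2}\by_t$, and then the chain-rule identity recorded above gives $\tilde{\bz}_{t+1} = \mathbf P^{1/2}\bz_{t+1}$, closing the induction.

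I do not anticipate any real obstacle: the entire argument is bookkeeping around one chain-rule identity and two affine updates. The only care required is to verify that the coefficients $c_t$ and $\beta$ are shared by both algorithms (they are, since they are parameters of the \texttt{SF} recursion and not of the objective), so that the affine combinations genuinely commute with $\mathbf P^{1/2}$. Once the invariant is established at all $t$, the conclusion $\tilde{\bx}_t = \mathbf P^{1/2}\bx_t$ follows immediately.
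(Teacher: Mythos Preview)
Your proposal is correct and takes essentially the same approach as the paper: the paper also proves the joint invariant $(\tilde{\bx}_t,\tilde{\by}_t,\tilde{\bz}_t)=(\mathbf P^{1/2}\bx_t,\mathbf P^{1/2}\by_t,\mathbf P^{1/2}\bz_t)$ by induction, using the same chain-rule identity for the $\bz$-update and linearity for the $\bx$- and $\by$-updates.
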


\begin{proof}
    We claim that the equivalence $(\Tilde{\bx}_t, \Tilde{\by}_t, \Tilde{\bz}_t)=(\mathbf{P}^{1/2}\bx_t, \mathbf{P}^{1/2}\by_t, \mathbf{P}^{1/2}\bz_t)$ holds for all $t$, where $(\Tilde{\bx}_t, \Tilde{\by}_t, \Tilde{\bz}_t)$ denotes the iterates on the reparametrized objective.
    
    For $t=1$, it holds from the definition. Assume that the equivalence holds at $t$. Then, the update for $\Tilde{\bz}_{t+1}$ is given by
    \begin{align*}
        \Tilde{\bz}_{t+1} &= \Tilde{\bz}_t - \gamma \nabla \Tilde{f}(\Tilde{\by}_t) \\
        &=\Tilde{\bz}_t - \gamma \bP^{-1/2} \nabla f(\bP^{-1/2}\Tilde{\by}_t) \\
        &=\bP^{1/2}\bz_t - \gamma \bP^{-1/2} \nabla f(\by_t) \quad \text{(inductive hypothesis)} \\
        &= \bP^{1/2}(\bz_t - \gamma \bP^{-1} \nabla f(\by_t)) \\
        &= \bP^{1/2} \bz_{t+1}.
    \end{align*}

    Meanwhile,
    \begin{align*}
        \Tilde{\bx}_{t+1} &= (1-c_{t+1})\Tilde{\bx}_t + c_t \Tilde{\bz}_{t+1} \\
        &= \bP^{1/2}((1-c_{t+1})\bx_t + c_t \bz_{t+1}) \\
        &= \bP^{1/2} \bx_{t+1}\\
        \Tilde{\by}_{t+1} &= (1-\beta)\Tilde{\bz}_{t+1} + \beta \Tilde{\bx}_{t+1} \\
        &= \bP^{1/2}((1-\beta)\bz_{t+1} + \beta \bx_{t+1}) \\
        &= \bP^{1/2}\by_{t+1},
    \end{align*}
    which proves the claim.
\end{proof}

\stabilityPreconGD*

\begin{proof}
    Recall that \texttt{SF-PrecondGDW} is defined by \eqref{eq:schefree} with $\Delta_t \triangleq \mathbf P^{-1} \nabla f(\by_t)+\lambda\by_t$, which is identical to
    \begin{align*}
        \mathbf P^{-1} \nabla f(\by_t)+\lambda\by_t = \mathbf{P}^{-1}\nabla g(\by_t), 
    \end{align*}
    where $g(\mathbf{w}) = f(\mathbf{w}) + \frac{1}{2}\lambda\|\mathbf{P}^{1/2}\mathbf{w}\|^2$. Therefore, \texttt{SF-PrecondGDW} is identical to \texttt{SF-PrecondGD} on the objective $g(\mathbf{w})$.
    
    Let $\{\tilde{\bx}_t\}$ be the iterates of \texttt{SF-GD} on the reparameterized objective $\tilde{g}(\mathbf{w}) = g(\mathbf{P}^{-1/2}\mathbf{w})$ with initialization $\tilde{\bx}_1 = \mathbf{P}^{1/2}\bx_1$. From \cref{lemma:reparam}, $\{\bx_t\}$, the iterates of \texttt{SF-PrecondGDW}, satisfy $\tilde{\bx}_t = \mathbf{P}^{1/2} \bx_t$, which implies that if $\tilde{\bx}_t$ diverges then $\bx_t$ also diverges. 

    From \cref{prop:stability_sfgd}, if $\lambda_1(\mathbf{P}^{-1}\mathbf{H} + \lambda\mathbf{I}) > \frac{2}{(1-\beta)\gamma}$, then $\tilde{\bx}_t$ diverges. This proves the claim.
\end{proof}

\subsection{Deriving the Reformulation of Schedule-Free Optimizer}\label{appendix:sf_reformulation}
We begin by defining the momentum variable:
\begin{align*}
\mathbf{m}_t \triangleq \frac{\mathbf{x}_{t} - \mathbf{z}_{t+1}}{\gamma}.
\end{align*}
Using the update rule $\mathbf{x}_t = (1 - c_t)\mathbf{x}_{t-1} + c_t \mathbf{z}_t$ and $\mathbf{z}_{t+1} = \mathbf{z}_t - \gamma \Delta_t$, we can express $\mathbf{m}_t$ recursively:
\begin{align*}
\mathbf{m}_t = \frac{\mathbf{x}_t - \mathbf{z}_{t+1}}{\gamma} = \frac{\mathbf{x}_t - \mathbf{z}_t}{\gamma} + \Delta_t
= (1 - c_t)\frac{\mathbf{x}_{t-1} - \mathbf{z}_t}{\gamma} + \Delta_t = (1 - c_t) \mathbf{m}_{t-1} + \Delta_t.
\end{align*}
Next, we derive an update rule for $\mathbf{y}_t$:
\begin{align*}
    \mathbf{y}_{t+1} &= (1-\beta)\, \mathbf{z}_{t+1} + \beta\, \mathbf{x}_{t+1}\\
    &= (1-\beta)\, (\mathbf{z}_{t} -\gamma \Delta_t) + \beta\, ((1-c_{t+1})\mathbf{x}_{t} +c_{t+1} \mathbf{z}_{t+1})\\
    &= (1-\beta)\mathbf{z}_{t} +\beta \mathbf{x}_{t}  -(1-\beta)\gamma \Delta_t + \beta c_{t+1} (\mathbf{z}_{t+1}-\mathbf{x}_{t})\\
    &= \mathbf{y}_t   -\gamma\left[ \beta c_{t+1} \mathbf{m}_t  +(1-\beta)\Delta_t\right].
\end{align*}
Hence, the update rule \eqref{eq:schefree} can be equivalently written as \eqref{eq:schefree_y}:
\begin{align*}
\mathbf{m}_t &=  (1-c_t) \mathbf{m}_{t-1} +\Delta_{t} , \\
    \mathbf{y}_{t+1}   &= \mathbf{y}_t   -\gamma  \beta c_{t+1} \mathbf{m}_t  -\gamma (1-\beta)\Delta_t . 
\end{align*}
Finally, we express $\mathbf{x}_t$ as a weighted average of $\mathbf{y}_t$. Starting from the original definitions:
\begin{align*}
\mathbf{x}_{t+1} &= (1 - c_{t+1}) \mathbf{x}_t + c_{t+1} \mathbf{z}_{t+1}, \\
\mathbf{z}_{t+1} &= \frac{\mathbf{y}_{t+1} - \beta \mathbf{x}_{t+1}}{1 - \beta},
\end{align*}
we substitute and obtain:
\begin{align*}
\mathbf{x}_{t+1}
= (1 - c_{t+1}) \mathbf{x}_t + c_{t+1} \left( \frac{\mathbf{y}_{t+1} - \beta \mathbf{x}_{t+1}}{1 - \beta} \right), 
\end{align*}
thus we conclude that
\begin{align*}
\mathbf{x}_{t+1}
&= \frac{(1 - c_{t+1})(1 - \beta) \mathbf{x}_t + c_{t+1} \mathbf{y}_{t+1}}{(1 - c_{t+1})(1 - \beta) + c_{t+1}}.
\end{align*}

\subsection{Deriving the Central Flow of Schedule-Free Optimizer}\label{appendix:central_flow}

\subsubsection{Deriving the Central Flow of Schedule-Free GD}

We begin with the reformulated update rule for \texttt{SF-GD}, as derived from \eqref{eq:schefree_y}:
\begin{align*}
\mathbf{m}_t &=  (1-c_t) \mathbf{m}_{t-1} + \nabla f(\by_t), \\
\mathbf{y}_{t+1} &= \mathbf{y}_t   -\gamma  \beta c_{t+1} \mathbf{m}_t  -\gamma (1-\beta)\nabla f(\by_t).
\end{align*}

As in gradient descent, stable training dynamics are often well-approximated by their continuous-time analogs. We can therefore define a corresponding \emph{stable flow} for \texttt{SF-GD}:
\begin{align*}
    \frac{d \by}{dt} &= -\gamma (1-\beta)\nabla f(\by) - \gamma \beta c(t+1) \mathbf m, \\
    \frac{d \mathbf m}{dt} &= \frac{1}{1-c(t)}\left[ \nabla f(\by)  - c(t) \mathbf m\right].
\end{align*}

However, at the Edge of Stability, the optimization trajectory deviates from this stable flow. We now derive a \emph{central flow} to characterize the time-averaged behavior of \texttt{SF-GD} under this regime, particularly when a single top eigenvalue remains at the stability threshold. This derivation is not rigorous but follows the ansatz approach used by \citet{cohen2025understanding}.

We model the trajectory as $\by_t = \bar{\by}_t + \rho_t \mathbf u_t$, where $\bar{\by}_t$ is the time-averaged iterate, $\mathbf u_t$ is the top Hessian eigenvector at $\bar{\by}_t$, and $\rho_t$ is the scalar displacement along $\mathbf u_t$. By construction, $\E [\rho_t] = 0$. Using a Taylor expansion of $\nabla f(\by)$ around the reference point $\bar{\by}$, we obtain:
\begin{align*}
    \nabla f(\by) = \nabla f(\bar{\by}) + \rho S(\bar{\by}) \mathbf u + \tfrac{1}{2} \rho^2 \nabla S(\bar{\by}) + \mathcal O (\rho^3),
\end{align*}
where $S(\by):=\lambda_1 (\nabla^2f(\by))$ denotes the sharpness at $\by$.
Taking expectations, the time-averaged gradient norm becomes:
\begin{align*}
    \E[\nabla f(\by_t)] &\approx \nabla f(\bar{\by}) + \E[\rho_t] S(\bar{\by}) \mathbf u + \tfrac{1}{2} \E[\rho_t^2] \nabla S(\bar{\by}) = \nabla f(\bar{\by})+ \tfrac{1}{2} \E[\rho_t^2] \nabla S(\bar{\by}). 
\end{align*}
Based on these approximations, we can derive the following central flow dynamics of $\bar{\by}_t$:
\begin{align*}
    \frac{d \by}{dt} &= -\gamma (1-\beta) \left[\nabla f(\by) + \tfrac{1}{2}\sigma^2(t)\nabla S(\by)\right] - \gamma \beta c(t+1) \mathbf m, \\
    \frac{d \mathbf m}{dt} &= \frac{1}{1-c(t)}\left[ \nabla f(\by) + \tfrac{1}{2}\sigma^2(t)\nabla S(\by) - c(t) \mathbf m\right],
\end{align*}
where $\sigma^2(t)$ models $\E[\rho_t^2]$, the instantaneous variance of the oscillations around the central flow trajectory (i.e., the river trajectory).

Recall that at the Edge of Stability, the sharpness equilibrates near the stability threshold. We therefore assume that it remains constant along the central flow trajectory, satisfying
\[
S(\by) = \frac{2}{(1-\beta)\gamma},\quad 
\frac{d}{dt}\left( S(\by) \right) = 0.
\]
There exists a unique value of $\sigma^2(t)$ that ensures this condition holds, particularly in the regime where $t$ is large, where the coefficient $c(t)=1/t$ becomes negligible. To compute this value of $\sigma^2$, we apply the chain rule and substitute the central flow dynamics:
\begin{align*}
\frac{dS(\by)}{dt}= \inner{\nabla S(\by), \frac{d\by}{dt}} \approx \inner{\nabla S(\by), -\gamma (1-\beta) \left[\nabla f(\by) + \tfrac{1}{2}\sigma^2(t)\nabla S(\by)\right]},
\end{align*}
where we approximate $c(t)\approx 0$. Setting $\frac{dS(\by)}{dt} = 0$ and rearranging gives:
\begin{align*}
    \sigma^2 (\by) \approx \frac{2\inner{\nabla S(\by), -\nabla f(\by)}}{\|\nabla S(\by)\|^2}.
\end{align*}

\subsubsection{Deriving the Central Flow of Schedule-Free Scalar Adam}

We begin with the reformulated update rule for \texttt{SF-ScalarAdam}, as derived from \eqref{eq:schefree_y}:
\begin{align*}
\nu_{t} &= \beta_2\nu_{t-1}+(1-\beta_2)\|\nabla f(\by_t)\|^2, \\
\mathbf{m}_t &=  (1-c_t) \mathbf{m}_{t-1} + \frac{\nabla f(\by_t)}{\sqrt{\nu_t}}, \\
\mathbf{y}_{t+1} &= \mathbf{y}_t   -\gamma  \beta_1 c_{t+1} \mathbf{m}_t  -\gamma (1-\beta_1)\frac{\nabla f(\by_t)}{\sqrt{\nu_t}}.
\end{align*}

As in gradient descent, stable training dynamics are often well-approximated by their continuous-time analogs. We can therefore define a corresponding \emph{stable flow} for \texttt{SF-ScalarAdam}:
\begin{align*}
    \frac{d \by}{dt} &= -\frac{\gamma (1-\beta_1)}{\sqrt{\nu}} \nabla f(\by) - \gamma \beta_1 c(t+1) \mathbf m, \\
    \frac{d \mathbf m}{dt} &= \frac{1}{1-c(t)}\left[ \frac{1}{\sqrt{\nu}} \nabla f(\by)  - c(t) \mathbf m\right], \\
    \frac{d\nu}{dt} &= \frac{1-\beta_2}{\beta_2} \left[\|\nabla f(\by)\|^2 - \nu\right],
\end{align*}

However, at the Edge of Stability, the optimization trajectory deviates from this stable flow. We now derive a \emph{central flow} to characterize the time-averaged behavior of \texttt{SF-ScalarAdam} under this regime, particularly when a single top eigenvalue remains at the stability threshold. This derivation is not rigorous but follows the ansatz approach used by \citet{cohen2025understanding}.

We model the trajectory as $\by_t = \bar{\by}_t + \rho_t \mathbf u_t$, where $\bar{\by}_t$ is the time-averaged iterate, $\mathbf u_t$ is the top Hessian eigenvector at $\bar{\by}_t$, and $\rho_t$ is the scalar displacement along $\mathbf u_t$. By construction, $\E [\rho_t] = 0$. Using a Taylor expansion of $\nabla f(\by)$ around the reference point $\bar{\by}$, we obtain:
\begin{align*}
    \nabla f(\by) = \nabla f(\bar{\by}) + \rho S(\bar{\by}) \mathbf u + \tfrac{1}{2} \rho^2 \nabla S(\bar{\by}) + \mathcal O (\rho^3),
\end{align*}
where $S(\by):=\lambda_1 (\nabla^2f(\by))$ denotes the sharpness at $\by$.
Taking expectations, the time-averaged gradient and squared gradient norm become:
\begin{align*}
    \E[\nabla f(\by_t)] &\approx \nabla f(\bar{\by}) + \E[\rho_t] S(\bar{\by}) \mathbf u + \tfrac{1}{2} \E[\rho_t^2] \nabla S(\bar{\by}) = \nabla f(\bar{\by})+ \tfrac{1}{2} \E[\rho_t^2] \nabla S(\bar{\by}), \\
    \E[\|\nabla f(\by_t)\|^2] &\approx \|\nabla f(\bar{\by})\|^2 + 2\E[\rho_t] S(\bar{\by}) \inner{\nabla f(\bar{\by}),\mathbf u} +  \E[\rho_t^2] S(\bar{\by})^2 = \|\nabla f(\bar{\by})\|^2+ \E[\rho_t^2]  S(\bar{\by})^2.
\end{align*}
Based on these approximations, we can derive the following central flow dynamics of $\bar{\by}_t$:
\begin{align*}
    \frac{d \by}{dt} &= -\frac{\gamma (1-\beta_1)}{\sqrt{\nu}} \left[\nabla f(\by) + \tfrac{1}{2}\sigma^2(t)\nabla S(\by)\right] - \gamma \beta_1 c(t+1) \mathbf m, \\
    \frac{d \mathbf m}{dt} &= \frac{1}{1-c(t)}\left[ \frac{1}{\sqrt{\nu}} (\nabla f(\by) + \tfrac{1}{2}\sigma^2(t)\nabla S(\by)) - c(t) \mathbf m\right], \\
    \frac{d\nu}{dt} &= \frac{1-\beta_2}{\beta_2} \left[\|\nabla f(\by)\|^2 + \sigma^2(t)S(\by)^2 - \nu\right],
\end{align*}
where $\sigma^2(t)$ models $\E[\rho_t^2]$, the instantaneous variance of the oscillations around the central flow trajectory (i.e., the river trajectory).

Recall that at the Edge of Stability, the preconditioned sharpness equilibrates near the stability threshold. We therefore assume that it remains constant along the central flow trajectory, satisfying
\[
\frac{S(\by)}{\sqrt{\nu}} = \frac{2}{(1-\beta_1)\gamma},\quad 
\frac{d}{dt}\left( \frac{S(\by)}{\sqrt{\nu}} \right) = 0.
\]
There exists a unique value of $\sigma^2(t)$ that ensures this condition holds, particularly in the regime where $t$ is large, where the coefficient $c(t)=1/t$ becomes negligible. To compute this value of $\sigma^2$, we apply the chain rule and substitute the central flow dynamics:
\begin{align*}
\frac{d}{dt}\left( \frac{S(\by)}{\sqrt{\nu}} \right) =& \frac{1}{\sqrt{\nu}} \inner{\nabla S(\by), \frac{d\by}{dt}} - \frac{S(\by)}{2\nu^{3/2}} \cdot \frac{d\nu}{dt}\\
\approx& \frac{1}{\sqrt{\nu}} \inner{\nabla S(\by), -\frac{\gamma (1-\beta_1)}{\sqrt{\nu}} \left[\nabla f(\by) + \tfrac{1}{2}\sigma^2\nabla S(\by)\right]} \\
&- \frac{S(\by)}{2\nu^{3/2}} \cdot \frac{1-\beta_2}{\beta_2} \left[\|\nabla f(\by)\|^2 + \sigma^2S(\by)^2 - \nu\right],
\end{align*}
where we approximate $c(t)\approx 0$. Setting $\frac{d}{dt}\left( \frac{S(\by)}{\sqrt{\nu}} \right) = 0$ and rearranging gives:
\begin{align*}
    \sigma^2 \approx \frac{\inner{\nabla S(\by), -\nabla f(\by)} + \frac{1-\beta_2}{2(1-\beta_1)\beta_2\gamma} \left[S(\by)\sqrt{\nu} - \frac{1}{\sqrt{\nu}}S(\by)\|\nabla f(\by)\|^2\right]}{\frac{1}{2}\|\nabla S(\by)\|^2 + \frac{1-\beta_2}{2(1-\beta_1)\beta_2\gamma}\cdot\frac{S(\by)^3}{\sqrt{\nu}}}.
\end{align*}
Using the condition $\frac{S(\by)}{\sqrt{\nu}} = \frac{2}{(1-\beta_1)\gamma}$, we substitute $\sqrt{\nu} = \tfrac{1}{2}(1-\beta_1)\gamma S(\by)$ into the expression and obtain:
\[
\sigma^2(\by;\beta_1,\beta_2,\gamma) \approx \frac{\inner{\nabla S(\by), -\nabla f(\by)} + \tfrac{1-\beta_2}{\beta_2}[\tfrac{1}{4}S(\by)^2 - \tfrac{1}{(1-\beta_1)^2\gamma^2}\|\nabla f(\by)\|^2 ]}{\tfrac{1}{2}\|\nabla S(\by)\|^2 + \tfrac{1-\beta_2}{(1-\beta_1)^2\beta_2\gamma^2}S(\by)^2}.
\]
Notably, $\sigma^2$ depends only on the current iterate $\by$ and the hyperparameters $\beta_1$, $\beta_2$, and $\gamma$. Moreover, unlike \texttt{SF-GD}, $\sigma^2$ \emph{does} depend on momentum parameters.

\subsection{Omitted Calculations in \cref{sec:refined}}\label{appendix:refined}
We derive the closed-form approximation of the averaging weights $\alpha_t$ under the modified SF method, where the update coefficient is set to
\[
c_t=\frac{(1-\beta)C}{t}.
\]
Under this setting, we show that the induced averaging weights satisfy the approximation
\[
\alpha_t \approx \frac{C}{T}\left(\frac{t}{T}\right)^{C-1}.
\]
Recall that for general $\{c_t\}$, the iterates $\bx_T$ can be written as a weighted average of past $\by_t$:
\[
\bx_T
  =\sum_{t=1}^{T}\!\alpha_t\,\by_t,\quad
  \alpha_t:=\frac{c_t}{(1-c_t)(1-\beta)+c_t}\!\prod_{s=t+1}^{T}\!\left[\frac{(1-c_s)(1-\beta)}{(1-c_s)(1-\beta)+c_s}\right].
\]
Now, substitute $c_t = \frac{(1 - \beta) C}{t}$. For large $t$, we approximate:
\begin{align*}
\alpha_t &= \frac{\frac{C}{t}}{1-\frac{(1-\beta)C}{t}+\frac{C}{t}}\left[\prod_{s=t+1}^T \frac{1-\frac{(1-\beta)C}{s}}{1+\frac{\beta C}{s}} \right] \\
&\approx \frac{C}{t}\left[\prod_{s=t+1}^T \frac{1-\frac{(1-\beta)C}{s}}{1+\frac{\beta C}{s}} \right] \\
&\approx \frac{C}{t} \left[\prod_{s=t+1}^T \frac{\exp \left(-\frac{(1-\beta)C}{s}\right)}{\exp \left(\frac{\beta C}{s}\right)} \right] \\
&= \frac{C}{t}\left[\prod_{s=t+1}^T \exp \left(-\frac{C}{s}\right) \right] \\
&= \frac{C}{t} \exp \left(-\sum_{s=t+1}^T \frac{C}{s} \right).
\end{align*}
Using the integral approximation for the harmonic sum:
\[
\sum_{s=t+1}^T\frac{1}{s} \approx \int_{s=t}^T \frac{1}{s},
\]
we obtain
\begin{align*}
\alpha_t &\approx \frac{C}{t} \exp \left(-\sum_{s=t+1}^T \frac{C}{s} \right) \\
&\approx \frac{C}{t} \exp \left( -\int_{s=t}^T \frac{C}{s} \right) \\
&= \frac{C}{t} \exp \left( -C\log T + C\log t \right) \\
&= \frac{Ct^{C-1}}{T^C} .
\end{align*}
Thus, we conclude that
\[
\alpha_t\;\approx\;\frac{C}{T}\Bigl(\frac{t}{T}\Bigr)^{C-1}.
\]
\cref{fig:modified_weights_histogram} and \cref{fig:modified_weights_histogram_T5000} show that the averaging weights $\{\alpha_t\}$ in our refined SF method closely follow the approximation $\alpha_t \approx \tfrac{C}{T}(\tfrac{t}{T})^{C-1}$, across different values of $\beta$, $C$, and $T$.

\begin{figure}[!tbhp]
    \centering
    \includegraphics[width=\linewidth]{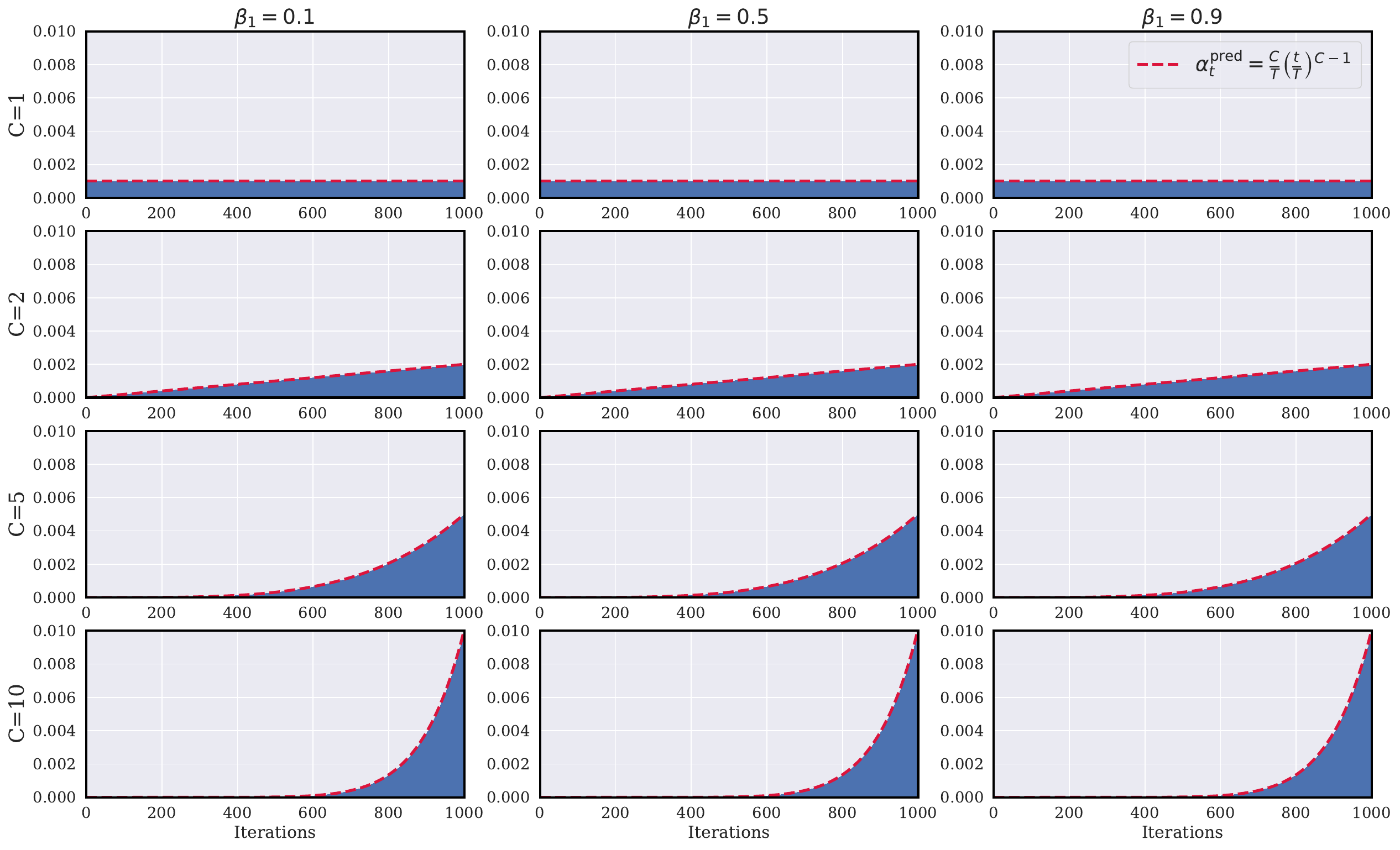}
    \vspace{-15pt}
    \caption{\textbf{Averaging weights in the refined SF method.} Histogram of $\{\alpha_t\}_{t=1}^T$ over $T = 1000$ iterations for varying values of $\beta$ and $C$.}
    \label{fig:modified_weights_histogram}
\end{figure}

\begin{figure}[!tbhp]
    \centering
    \includegraphics[width=\linewidth]{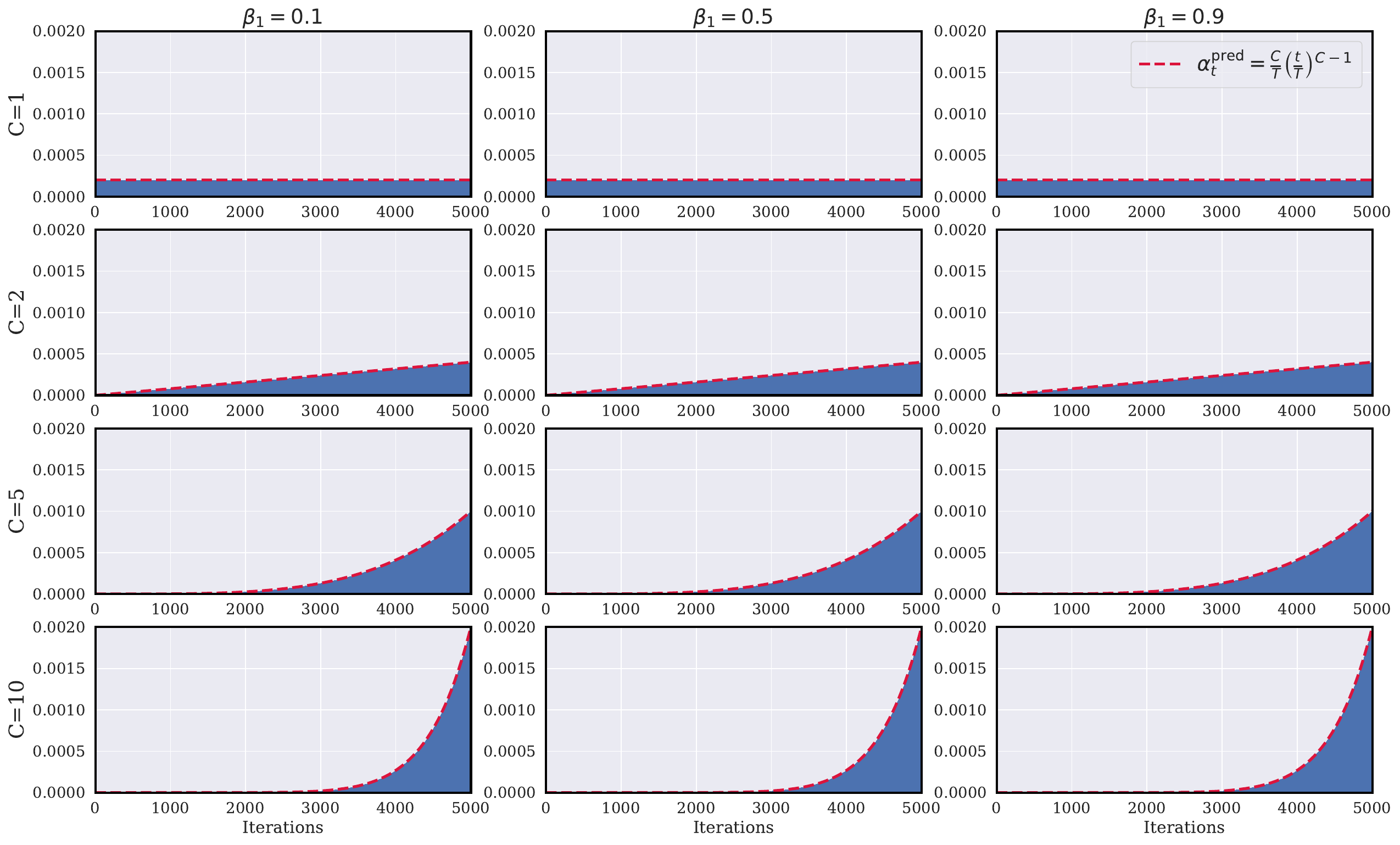}
    \vspace{-15pt}
    \caption{\textbf{Averaging weights in the refined SF method.} Histogram of $\{\alpha_t\}_{t=1}^T$ over $T = 5000$ iterations for varying values of $\beta$ and $C$.}
    \label{fig:modified_weights_histogram_T5000}
\end{figure}

\end{document}